\documentclass[11pt]{article}
\ifdefined\pdfminorversion\pdfminorversion=7\fi
\usepackage[preprint]{acl}
\usepackage{times}
\usepackage{latexsym}
\usepackage[T1]{fontenc}
\usepackage[utf8]{inputenc}
\usepackage{microtype}
\usepackage{graphicx}
\usepackage{amsmath,amssymb}
\usepackage{amsthm}
\theoremstyle{definition}
\newtheorem{assumption}{Assumption}
\theoremstyle{plain}
\newtheorem{theorem}{Theorem}
\usepackage{booktabs,multirow}
\usepackage{xspace}
\usepackage[table]{xcolor}

\newcommand{\ours}{\textbf{\texttt{SeTPO}}\xspace}
\definecolor{ablationdrop}{HTML}{A94442}
\definecolor{teachergreen}{HTML}{2E8B57}
\definecolor{teacherbluegray}{HTML}{EDF2F7}
\definecolor{oursorange}{HTML}{FFF0DD}
\DeclareRobustCommand{\tablecolorlegend}[1]{%
  \begingroup\setlength{\fboxsep}{0pt}\setlength{\fboxrule}{0.2pt}%
  \raisebox{0.1ex}{\fcolorbox{black!25}{#1}{\rule{0pt}{0.65em}\rule{0.65em}{0pt}}}%
  \endgroup}
\newcommand{\teacherbetter}{\textcolor{teachergreen}{\ensuremath{\boldsymbol{\uparrow}}}}
\newcommand{\ablationchange}[1]{\textcolor{ablationdrop}{(#1)}}

\definecolor{placeholderorange}{HTML}{B84700}
\definecolor{placeholderbg}{HTML}{FFF0D8}

\title{Teacher-Guided Tool Learning for Small Language Models via Bidirectional KL Constraints}
\author{
  Haoxuan Gao\textsuperscript{1},
  Qin-Wen Luo\textsuperscript{1},
  Xiang Chen\textsuperscript{1}\thanks{Corresponding author.},
  Yu Tian\textsuperscript{2} \\
  \bfseries Zequn Sun\textsuperscript{3},
  Xiaofeng Cao\textsuperscript{4},
  Jiangchao Yao\textsuperscript{5},
  Sheng-Jun Huang\textsuperscript{1} \\
  \textsuperscript{1}Nanjing University of Aeronautics and Astronautics \\
  \textsuperscript{2}360 Security Capability Center \quad
  \textsuperscript{3}Nanjing University \\
  \textsuperscript{4}Tongji University \quad
  \textsuperscript{5}Shanghai Jiao Tong University \\
  \texttt{\{haoxuangao0402,xiang\_chen\}@nuaa.edu.cn}
}
\AddToHook{shipout/after}{%
  \ifnum\value{page}=4\relax
    \begingroup\globaldefs=1\raggedbottom\endgroup
  \else\ifnum\value{page}=5\relax
    \begingroup\globaldefs=1\flushbottom\endgroup
  \fi\fi}
\begin{document}
\maketitle
\flushbottom
\predisplaypenalty=0
\begin{abstract}
External tools can extend the capabilities of small language models (SLMs), enabling their deployment in resource-constrained environments. However, their limited capabilities make it difficult for SLMs to discover effective tool-use behaviors through reinforcement learning alone, while direct teacher supervision suffers from teacher--student behavioral distribution mismatch and may propagate erroneous preferences. To address these challenges, we propose Selective Teacher-guided Policy Optimization (\ours), which coordinates student exploration with selective teacher guidance and persistent demonstration supervision. Specifically, \ours incorporates reverse-KL-derived token-level teacher feedback into policy advantages on low-reward trajectories, while auxiliary forward-KL supervision preserves teacher-demonstrated behaviors throughout training. Under an idealized bidirectional-KL objective, we theoretically establish that the optimal policy achieves expected reward no lower than the teacher's while retaining a guaranteed fraction of the teacher's probability mass. Extensive experiments on BFCL-v4 and API-Bank demonstrate that \ours achieves superior overall performance over supervised fine-tuning, reinforcement learning, and distillation baselines across all three student scales.
\end{abstract}
\section{Introduction}
\label{sec:introduction}

\begin{figure}[!t]
\centering
\includegraphics[width=\columnwidth]{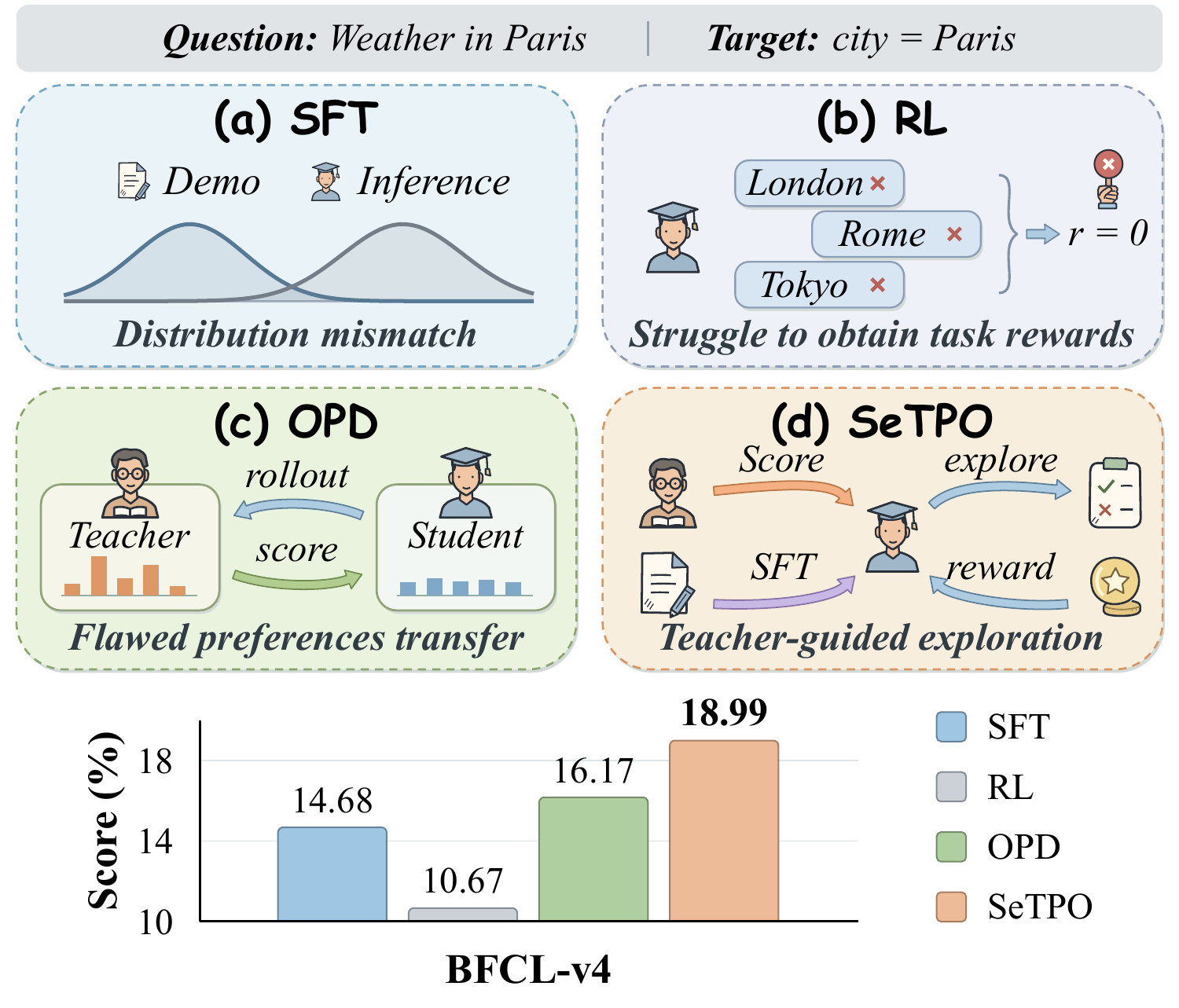}
\caption{Motivation for coordinating supervision in tool learning for small language models. Top: a schematic weather-tool example comparing SFT, GRPO-based RL, the pure OPD baseline used here, and \ours. Bottom: BFCL-v4 scores for Qwen2.5-0.5B.}
\label{fig:motivating_comparison}
\end{figure}

Small language models (SLMs) offer a practical foundation for resource-efficient systems whose capabilities can be extended through external tools \citep{erdogan2024,lin2025}. However, realizing this promise presents a fundamental learning challenge: models with limited underlying capabilities must learn to exploit external resources to accomplish tasks they cannot solve independently. These learning difficulties can prevent SLMs from fully benefiting from external tools. The central question is therefore: \textbf{how can we unlock the full potential of SLMs in tool-use tasks?}

Supervised fine-tuning (SFT) provides an initial foundation through tool-use demonstrations \citep{liu2025}. However, a mismatch between demonstration-based training and the model's own generation at inference can limit its effectiveness \citep{bengio2015,agarwal2024}. To reduce this mismatch, reinforcement learning (RL) optimizes the model's own tool-use attempts using task rewards \citep{qian2025}. Yet SLMs may struggle to discover successful behaviors, leaving few useful experiences for learning. Recent work on on-policy distillation (OPD) supplies additional learning signals through teacher feedback on student-generated trajectories \citep{agarwal2024,gu2024}. However, teacher-preferred outputs may also be incorrect, and imitation can pass these errors to the student \citep{agarwal2024}. These limitations, illustrated in Figure~\ref{fig:motivating_comparison}, raise a more specific question: \textbf{how can SLMs improve their tool-use capabilities through their own exploration with teacher guidance?}

Teacher guidance may not translate into effective learning for SLMs due to teacher--student differences in capacity and behavioral preferences \citep{gu2024,lyu2025}. Simply reproducing teacher preferences therefore does not ensure effective knowledge transfer. This calls for RL to complement teacher guidance with students' own exploration. On-policy RL supports students' own exploration while favoring solutions close to their original behavior distributions \citep{shenfeld2026}. KL constraints provide a way to incorporate teacher guidance into this exploration. This motivates selective teacher-guided RL that supports learning where autonomous exploration falls short, helping unlock SLMs' tool-use potential.

Following this principle, we propose \textbf{Se}lective \textbf{T}eacher-guided \textbf{P}olicy \textbf{O}ptimization (\ours). First, students explore through RL and receive guidance on low-reward trajectories from a frozen teacher, which scores generated tokens under the students' own prefixes. This provides reverse-KL guidance for improving students' own attempts. Second, auxiliary SFT remains active throughout RL, using complete teacher demonstrations to supply behaviors beyond current exploration and maintain the response structure required for tool interaction. This provides forward-KL supervision for retaining teacher-supported behaviors. Under an idealized bidirectional-KL objective, we prove that the optimal policy achieves expected reward no lower than the teacher's while retaining a minimum fraction of the teacher's probability mass on any response set.

Our contributions are:
\begin{itemize}
\item We identify three obstacles to tool learning in SLMs: distribution mismatch between SFT demonstrations and student-generated attempts, limited successful exploration under RL, and teacher--student differences that hinder knowledge transfer in OPD.
\item We propose \ours with selective reverse-KL teacher guidance and persistent forward-KL demonstration supervision. Our idealized bidirectional-KL analysis establishes, at optimality, a lower bound on teacher-behavior retention and expected reward non-degradation relative to the teacher.
\item Experiments on BFCL-v4 and API-Bank show that \ours improves overall tool-use performance across three student scales. It also reduces tool-selection and argument-construction errors and helps maintain stable output formats during training.
\end{itemize}

\section{Related Work}
\label{sec:related_work}

\paragraph{SLM Tool Learning.}
Small language models must learn tool selection, argument construction, and coordinated calls, motivating both supervised demonstrations and autonomous exploration. SFT methods use specialized or verified synthetic demonstrations to teach these behaviors, including multi-turn interactions \citep{erdogan2024,liu2024,zhang2025,prabhakar2025}. Capacity-aware data generation and name masking further adapt supervision to tool learning \citep{liu2025,lin2025}. RL methods optimize models' attempts with task-, call-, or step-level rewards that assess calling format, functional correctness, and task contribution \citep{qian2025,yu2025,li2026}, or develop exploration through reasoning without annotated traces and self-play curricula \citep{zhang2026a,acikgoz2026}. Other approaches combine teacher-generated trial-and-correction imitation with interactive RL or refine training data from errors \citep{gao2026,zhang2026b}. These studies mainly improve tool use through demonstration data, reward design, and interactive exploration. We focus on cases where SLMs struggle to learn effectively through autonomous exploration, using teacher guidance to supplement learning signals from their own attempts.

\paragraph{Teacher-Guided Learning.}
Teacher-guided learning uses complete demonstrations or student-adapted targets and trajectories \citep{kang2025,shing2025,tang2026}. On-policy distillation supervises student-generated outputs to reduce the mismatch between teacher references and student histories \citep{gu2024,agarwal2024}. Student-history corrections and divergence-aware feedback \citep{lyu2025,guo2026,zhong2026}, together with staged, mixed, or interleaved assistance \citep{ni2026,yan2025,ma2026}, further adapt guidance to students' learning processes. Beyond the content of guidance, when teachers intervene also matters for knowledge transfer. Reward- and failure-based methods selectively distill responses, supply teacher-derived rewards, or route trajectories between reward optimization and feedback-conditioned self-distillation \citep{xu2025a,kotoge2026,li2026a}. At the group level, DyME and RSTG trigger demonstration learning or adaptive guidance when all sampled responses fail \citep{liu2026,han2026}. These studies promote knowledge transfer by adapting supervision and guidance allocation to students' learning. \ours selectively provides teacher guidance based on student task performance and combines online feedback with demonstration supervision to support knowledge transfer and autonomous exploration.

\section{Preliminaries}
\label{sec:preliminaries}

\subsection{Tool-Use Tasks and Feedback}

Given a request and tool descriptions $x$, an SLM with policy $\pi_\theta$ generates a trajectory $y=(y_1,\ldots,y_T)$ of reasoning, structured tool calls, and responses. A tool call must identify an appropriate function and specify arguments consistent with its interface and the user's request. Each token depends on $x$ and its prefix $y_{<t}=(y_1,\ldots,y_{t-1})$.

A deterministic Composite Checker evaluates each completed trajectory $y_j$ using
\begin{equation}
R_j
=R_{\mathrm{format}}+R_{\mathrm{correct}}.
\label{eq:final_reward}
\end{equation}
The format component checks response structure, while correctness compares tool names, argument names, and argument values against reference calls. This scalar reward credits partially correct calls but provides no token-level correction targets.

\subsection{Group Relative Policy Optimization}

Group Relative Policy Optimization (GRPO)~\citep{shao2024} samples $G$ trajectories $\{y_j\}_{j=1}^{G}$ from the behavior policy $\pi_{\theta_{\mathrm{old}}}$ for each input $x$, and computes group-relative advantages from rewards $R_j$:
\begin{equation}
A_j
=\frac{R_j-\operatorname{mean}(\{R_1,\ldots,R_G\})}
{\operatorname{std}(\{R_1,\ldots,R_G\})+\epsilon_A},
\label{eq:grpo_advantage}
\end{equation}
where $\epsilon_A$ ensures numerical stability. Above-average rewards yield positive advantages, while below-average rewards yield negative ones. When a group's attempts receive the same low reward, their task advantages are zero, providing no task-driven update direction for improving tool selection or argument construction in subsequent attempts.

For a generated token $y_{j,t}$, the policy ratio is
\begin{equation}
\rho_{j,t}(\theta)
=\frac{\pi_\theta(y_{j,t}\mid x,y_{j,<t})}
{\pi_{\theta_{\mathrm{old}}}(y_{j,t}\mid x,y_{j,<t})}.
\label{eq:policy_ratio}
\end{equation}
With $\bar\rho_{j,t}=\operatorname{clip}(\rho_{j,t},1-\epsilon,1+\epsilon)$, the clipped GRPO loss is
\begin{equation}
\begin{aligned}
\mathcal{L}_{\mathrm{GRPO}}(\theta)
={}&-\mathbb{E}_{j,t}\bigl[\min\bigl(
\\[-2pt]
&\rho_{j,t}(\theta)A_j,\bar\rho_{j,t}(\theta)A_j\bigr)\bigr].
\end{aligned}
\label{eq:grpo_objective}
\end{equation}
Here, $\epsilon$ is the clipping parameter, and the expectation averages over valid student-generated token positions. Tokens share their trajectory's task advantage. This provides a trajectory-level learning signal without explicit correction targets for individual tool-selection or argument-construction decisions within the sampled trajectory.

\section{Method}
\label{sec:method}

\subsection{Overview}

Selective Teacher-guided Policy Optimization (\ours) helps SLMs acquire tool-use capabilities that are difficult to learn through autonomous exploration. As shown in Figure~\ref{fig:framework}, it combines online teacher guidance on low-reward student attempts with complete teacher demonstrations throughout RL. Online guidance addresses decisions made during exploration, while demonstration supervision provides complete reference behaviors that may be absent from student rollouts.

\begin{figure*}[!t]
\centering
\includegraphics[width=\textwidth]{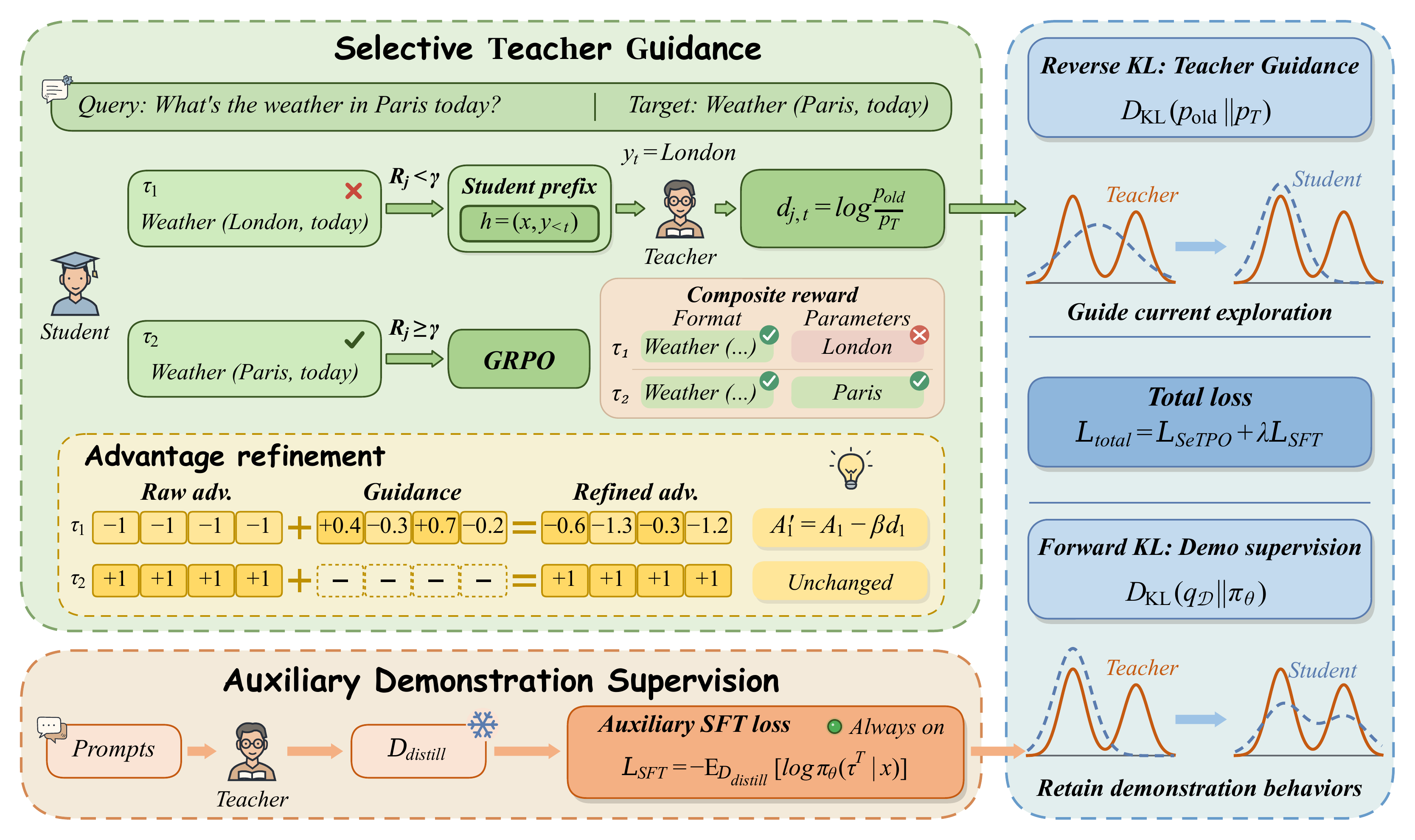}
\caption{Overview of \ours. Selective online teacher guidance and auxiliary
supervision from complete teacher trajectories jointly support policy
optimization. A composite checker selects low-reward
student trajectories for teacher scoring; token-wise log-probability
discrepancies refine their GRPO advantages. Both probabilities are evaluated for the same sampled token under the student's prefix.}
\label{fig:framework}
\end{figure*}

\subsection{Selective Teacher-guided Policy Optimization}
\label{sec:setpo}

\paragraph{Selective Teacher Guidance.}
RL allows students to improve through their own tool-use attempts, but unsuccessful exploration may provide limited information about how to revise individual decisions. We therefore focus online teacher guidance on low-reward attempts that require additional learning signals, while preserving task-driven online updates for higher-reward attempts whose behavior is already supported by task outcomes. For a student trajectory $y_j$ with reward $R_j$, we define the intervention indicator as
\begin{equation}
\mathbb{I}_j=\mathbb{I}[R_j<\gamma],
\label{eq:intervention_indicator}
\end{equation}
where $\gamma$ is the intervention threshold. The gate is evaluated separately for each trajectory, so attempts within the same rollout group can receive different guidance according to their individual outcomes. The same prompt can therefore receive different guidance as the student's sampled attempts change. For selected trajectories, the frozen teacher evaluates generated tokens under the student's own prefix $h_{j,t}=(x,y_{j,<t})$. Teacher feedback therefore addresses the student's actual decision context while the student generates its own attempts.

\paragraph{Teacher-Guided Policy Updates.}
At the student-generated prefix $h_{j,t}$, let $p_{\mathrm{old}}(v)=\pi_{\theta_{\mathrm{old}}}(v\mid h_{j,t})$ and $p_T(v)=\pi_T(v\mid h_{j,t})$ denote the behavior-policy and teacher distributions. We compute the token-wise log-probability discrepancy:
\begin{equation}
d_{j,t}
=\log p_{\mathrm{old}}(y_{j,t})-\log p_T(y_{j,t}).
\label{eq:token_discrepancy}
\end{equation}
At a fixed prefix, its expectation under the behavior policy equals the reverse KL $D_{\mathrm{KL}}(p_{\mathrm{old}}\|p_T)$, making $d_{j,t}$ a single-sample estimate~\citep{gu2024}. We incorporate this feedback into the task advantage through the reward gate:
\begin{equation}
A'_{j,t}
=A_j-\mathbb{I}_j\beta d_{j,t},
\label{eq:refined_advantage}
\end{equation}
where $\beta$ controls guidance strength. Tokens assigned higher probability by the teacher than by the behavior policy receive an increased advantage, and conversely for lower teacher probability. The refined advantage yields the clipped loss
\begin{equation}
\begin{aligned}
\mathcal{L}_{\text{\ours}}(\theta)
={}&-\mathbb{E}_{j,t}\bigl[\min\bigl(
\\[-2pt]
&\rho_{j,t}(\theta)A'_{j,t},
\bar\rho_{j,t}(\theta)A'_{j,t}\bigr)\bigr].
\end{aligned}
\label{eq:setpo_objective}
\end{equation}

\noindent The task advantage provides a shared trajectory-level signal, whereas teacher guidance differentiates tokens within selected attempts. Both signals act on the same student-generated trajectories through a single clipped policy objective.

\begin{table*}[!t]
  \centering
  \newsavebox{\mainresultstablebox}
\footnotesize
\setlength{\tabcolsep}{2.6pt}
\renewcommand{\arraystretch}{1.08}
\begin{lrbox}{\mainresultstablebox}%
\begin{tabular}{l cccccc >{\hspace{8pt}}c ccc}
\toprule
\multirow{2}{*}{\textbf{Model / Method}}
& \multicolumn{6}{c}{\textbf{BFCL-v4 (Acc)}}
& \multicolumn{4}{c}{\textbf{API-Bank (Acc)}} \\
\cmidrule(lr){2-7}\cmidrule(l{12pt}){8-11}
& \textbf{NL-P} & \textbf{NL-PM} & \textbf{Live-M} & \textbf{MT-B} & \textbf{MT-MF} & \textbf{Over.}
& \textbf{L1} & \textbf{L2} & \textbf{L3} & \textbf{Over.} \\
\midrule
\rowcolor{teacherbluegray}
Qwen2.5-1.5B-Instruct & 82.50 & 79.00 & 64.39 & 6.50 & 4.50 & 24.13 & 69.67 & 61.19 & 39.69 & 62.14 \\
\midrule
\textbf{Qwen2.5-0.5B-Instruct} & 0.00 & 0.00 & 0.57 & 0.00 & 0.00 & 10.38 & 22.81 & 8.96 & 22.14 & 21.11 \\
\quad + SFT & 50.00 & 26.50 & 33.24 & 1.50 & 1.00 & 14.68 & 36.84 & 17.91 & 27.48 & 32.66 \\
\quad + GRPO & 56.00 & 35.50 & 33.62 & 1.00 & 0.00 & 10.67 & 50.38 & \textbf{41.79} & 33.59 & 45.73 \\
\quad + SFT+GRPO & 61.00 & 41.00 & 42.74 & 1.00 & 0.00 & 11.29 & 38.10 & 17.91 & 32.06 & 34.51 \\
\quad + KD & 59.50 & 49.00 & \textbf{44.82} & 0.50 & 0.00 & 16.45 & 41.85 & 25.37 & 17.56 & 34.67 \\
\quad + OPD & 54.50 & 47.50 & 18.42 & 1.00 & \textbf{1.50} & 16.17 & 52.13 & 22.39 & \textbf{35.88} & 45.23 \\
\rowcolor{oursorange}
\quad + \ours & \textbf{64.00} & \textbf{55.50} & 34.38 & \textbf{3.00} & 0.50 & \textbf{18.99} & \textbf{52.63} & 28.36 & 35.11 & \textbf{46.06} \\
\addlinespace[3pt]
\midrule
\rowcolor{teacherbluegray}
Qwen2.5-7B-Instruct & 91.50 & 89.00 & 75.97 & 22.50 & 15.50 & 30.69 & 71.68 & 59.70 & 35.11 & 62.31 \\
\midrule
\textbf{Qwen2.5-1.5B-Instruct} & 4.50 & 25.00 & 3.99 & 0.00 & 0.00 & 11.14 & 56.39 & 53.73 & 38.17 & 52.09 \\
\quad + SFT & 59.00 & 36.00 & 50.71 & 2.00 & 0.00 & 17.51 & 61.90 & 47.76 & \textbf{51.91} & 58.12 \\
\quad + GRPO & 81.00 & 78.50 & \textbf{65.05} & 6.00 & 1.00 & 19.43 & 59.65 & 52.24 & 38.93 & 54.27 \\
\quad + SFT+GRPO & 80.50 & 71.00 & 61.63 & 2.00 & 2.00 & 22.05 & 64.91 & 58.21 & 48.85 & 60.64 \\
\quad + KD & 74.50 & 65.50 & 53.75 & 5.00 & 2.00 & 21.60 & 61.90 & 55.22 & 35.11 & 55.28 \\
\quad + OPD & 78.50 & 61.50 & 52.14 & 5.50 & 3.00 & 22.17 & 65.16 & 58.21 & 37.40 & 58.29 \\
\rowcolor{oursorange}
\quad + \ours & \textbf{82.50} & \textbf{79.00} & 64.39 & \textbf{6.50} & \textbf{4.50} & \textbf{24.13} & \textbf{69.67} & \textbf{61.19}\teacherbetter & 39.69\teacherbetter & \textbf{62.14} \\
\addlinespace[3pt]
\midrule
\rowcolor{teacherbluegray}
Qwen2.5-14B-Instruct & 94.50 & 87.50 & 73.79 & 33.00 & 27.00 & 32.90 & 70.68 & 52.24 & 42.75 & 62.48 \\
\midrule
\textbf{Qwen2.5-3B-Instruct} & 14.50 & 24.50 & 61.54 & 5.50 & 1.50 & 20.35 & 42.86 & 22.39 & 18.32 & 35.18 \\
\quad + SFT & 62.00 & 55.50 & 45.30 & 6.00 & 1.00 & 14.26 & 59.90 & 41.79 & \textbf{57.25} & 57.29 \\
\quad + GRPO & 83.50 & 78.00 & 69.99 & \textbf{10.50} & 3.50 & 24.81 & 67.42 & 58.21 & 35.11 & 59.30 \\
\quad + SFT+GRPO & 87.50 & \textbf{85.50} & 69.99 & 9.00 & 4.50 & 25.37 & 69.92 & 58.21 & 37.40 & 61.47 \\
\quad + KD & 84.50 & 80.50 & 64.20 & 7.50 & 4.00 & 24.53 & 67.92 & 62.69 & 31.30 & 59.30 \\
\quad + OPD & 83.50 & 80.50 & 67.24 & 7.00 & \textbf{5.50} & 24.76 & 69.67 & 62.69 & 38.93 & 62.14 \\
\rowcolor{oursorange}
\quad + \ours & \textbf{88.50} & 84.50 & \textbf{71.98} & 6.50 & \textbf{5.50} & \textbf{25.39} & \textbf{72.43}\teacherbetter & \textbf{67.16}\teacherbetter & 35.88 & \textbf{63.82}\teacherbetter \\
\bottomrule
\end{tabular}%
\end{lrbox}%
\begin{minipage}{\wd\mainresultstablebox}
\centering
\usebox{\mainresultstablebox}
\caption{Main experimental results on BFCL-v4 and API-Bank across 0.5B, 1.5B, and 3B students. Each group starts with its corresponding teacher (\tablecolorlegend{teacherbluegray}~blue-gray row), followed by the initial student and its trained variants; \tablecolorlegend{oursorange}~pale orange rows highlight \ours. The best student result within each group is shown in \textbf{bold}. \teacherbetter~Green arrows mark \ours scores above the corresponding teacher.}
\label{table:main_experiment}
\end{minipage}
\end{table*}

\subsection{Auxiliary Demonstration Supervision}
\label{sec:auxiliary_demonstration}

Online guidance evaluates decisions that the student has already generated, but does not itself supply complete reference trajectories. Within a limited exploration budget, student rollouts may still lack effective sequences of tool selection and argument construction, or fail to maintain the required response structure. We therefore retain supervision from complete teacher demonstrations throughout RL. Before policy optimization, the frozen teacher generates a reference trajectory for each training prompt, forming $\mathcal{D}_{\mathrm{distill}}=\{(x_i,\tau_i^T)\}_{i=1}^{N_{\mathrm{RL}}}$. These references provide fixed supervision across policy updates, while student-generated trajectories evolve with the policy. We minimize the following demonstration loss:
\begin{equation}
\begin{aligned}
\mathcal{L}_{\mathrm{SFT}}(\theta)
={}&-\mathbb{E}_{(x,\tau^T)\sim\mathcal{D}_{\mathrm{distill}}}
\\[-2pt]
&\left[\sum_t\log\pi_\theta
(\tau_t^T\mid x,\tau_{<t}^T)\right].
\end{aligned}
\label{eq:auxiliary_sft}
\end{equation}
This supervision uses teacher-generated prefixes and remains active independently of the intervention gate. Let $q_{\mathcal D}(\tau\mid x)$ denote the empirical demonstration distribution. Minimizing this cross-entropy is equivalent to minimizing the forward KL $D_{\mathrm{KL}}\!\left(q_{\mathcal D}(\cdot\mid x)\|\pi_\theta(\cdot\mid x)\right)$, averaged over training prompts. The demonstration distribution is fixed, so its entropy is independent of student parameters.

\subsection{Theoretical Analysis}
\label{sec:theoretical_analysis}

Combining the online policy objective and auxiliary demonstration supervision yields
\begin{equation}
\mathcal{L}_{\mathrm{total}}(\theta)
=\mathcal{L}_{\text{\ours}}(\theta)
+\lambda\mathcal{L}_{\mathrm{SFT}}(\theta),
\label{eq:total_loss}
\end{equation}
where $\lambda$ controls the strength of demonstration supervision. The online objective combines task-reward optimization with reverse-KL-derived guidance on student-generated attempts, while auxiliary SFT minimizes forward KL from the empirical demonstration distribution $q_{\mathcal D}$ to the student policy. To examine how these complementary forms of supervision support policy improvement while preserving teacher-supported behaviors, we consider the following idealized objective.

\begin{assumption}[Bidirectional KL-Regularized Objective]
\label{ass:population}
For a fixed prompt, let $q$ be a teacher reference distribution with full support over a finite response space $\mathcal Y$, and let $R(y)$ denote the bounded task reward. We consider the population objective
\begin{equation}
\begin{aligned}
\mathcal F(p)
={}&\mathbb E_{y\sim p}[R(y)]\\
&-aD_{\mathrm{KL}}(p\|q)-bD_{\mathrm{KL}}(q\|p),
\end{aligned}
\label{eq:population_objective}
\end{equation}
where $a,b>0$ represent the effective strengths of reverse- and forward-KL regularization, respectively. This idealization represents the online teacher and demonstration distributions by a common reference $q$, and abstracts away selective intervention, group-relative advantage normalization, clipping, and finite-sample effects. The analysis optimizes over all response distributions rather than a restricted student parameterization.
\end{assumption}

\begin{theorem}[Reward Improvement and Teacher Behavior Preservation]
\label{thm:teacher_preservation}
Under Assumption~\ref{ass:population}, $\mathcal F$ admits a unique maximizer $p^\star$ over the probability simplex $\Delta(\mathcal Y)$. Let
\begin{equation}
\Delta_R=\max_{y\in\mathcal Y}R(y)-\min_{y\in\mathcal Y}R(y).
\label{eq:reward_range}
\end{equation}
Then the optimal policy achieves an expected reward no lower than that of the teacher:
\begin{equation}
\mathbb E_{p^\star}[R]\geq\mathbb E_q[R].
\label{eq:reward_improvement}
\end{equation}
Moreover, for every $y\in\mathcal Y$,
\begin{equation}
\frac{b}{b+\Delta_R}\leq\frac{p^\star(y)}{q(y)}
\leq\exp\left(\frac{\Delta_R}{a}\right).
\label{eq:probability_ratio_bound}
\end{equation}
Consequently, for any response set $\mathcal E\subseteq\mathcal Y$,
\begin{equation}
p^\star(\mathcal E)\geq\frac{b}{b+\Delta_R}\,q(\mathcal E).
\label{eq:behavior_preservation}
\end{equation}
\end{theorem}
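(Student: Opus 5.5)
Existence and uniqueness come from strict concavity on the compact simplex. The map $p\mapsto\mathbb E_p[R]$ is linear. The term $-aD_{\mathrm{KL}}(p\|q)=-a\sum_y p(y)\log(p(y)/q(y))$ is strictly concave and continuous on $\Delta(\mathcal Y)$, since $q$ has full support. The term $-bD_{\mathrm{KL}}(q\|p)=b\sum_y q(y)\log p(y)+\text{const}$ is concave and equals $-\infty$ on the boundary, where some $p(y)=0$, because $q(y)>0$. Hence $\mathcal F$ is upper semicontinuous and strictly concave. A maximizer exists, it is unique, and it lies in the relative interior, so $p^\star(y)>0$ for all $y$.

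Because $p^\star$ is interior, we can write the Lagrangian stationarity conditions with a multiplier $\mu$ for $\sum_y p(y)=1$:
\[
R(y)-a\log\frac{p^\star(y)}{q(y)}-a+b\,\frac{q(y)}{p^\star(y)}=\mu\quad\text{for all }y.
\]
Set $r(y)=p^\star(y)/q(y)$, so that $R(y)=\mu+a+a\log r(y)-b/r(y)$. The function $\phi(r)=a\log r-b/r$ is strictly increasing. Therefore $r(y)$ is a monotone function of $R(y)$, and $R(y)-R(y')=\phi(r(y))-\phi(r(y'))$.

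\emph{Ratio bounds.} Since $\sum_y q(y)r(y)=\sum_y p^\star(y)=1$ and $q$ has full support, there exist $y_-$ and $y_+$ with $r(y_-)\le1\le r(y_+)$.

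\emph{Upper bound.} For any $y$, compare with $y_-$:
\[
a\log r(y)-b/r(y)\le a\log r(y_-)-b/r(y_-)+\Delta_R.
\]
By monotonicity of $\phi$, we may take $r(y)\ge r(y_-)$, so $-b/r(y)\ge-b/r(y_-)$. This gives $a\log r(y)\le a\log r(y_-)+\Delta_R\le\Delta_R$, hence $r(y)\le e^{\Delta_R/a}$.

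\emph{Lower bound.} Compare with $y_+$, using $r(y)\le r(y_+)$ when $r(y)<1$:
\[
-b/r(y)\ge a\log\frac{r(y_+)}{r(y)}-b/r(y_+)-\Delta_R\ge-b-\Delta_R .
\]
The last step uses $\log(r(y_+)/r(y))\ge0$ and $r(y_+)\ge1$. This yields $r(y)\ge b/(b+\Delta_R)$. For $r(y)\ge1$ the bound is trivial. Summing $p^\star(y)\ge\frac{b}{b+\Delta_R}q(y)$ over $\mathcal E$ then gives \eqref{eq:behavior_preservation}.

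\emph{Reward improvement.} Since $r$ and $R$ are comonotone, Chebyshev's covariance inequality under $q$ gives
\[
\mathbb E_{p^\star}[R]-\mathbb E_q[R]=\mathrm{Cov}_q(r,R)\ge0,
\]
because $\mathbb E_q[r]=1$. Alternatively, we can avoid Chebyshev by using optimality directly: $\mathcal F(p^\star)\ge\mathcal F(q)=\mathbb E_q[R]$, and both KL terms are nonnegative, so $\mathbb E_{p^\star}[R]\ge\mathcal F(p^\star)\ge\mathbb E_q[R]$. This is cleaner, and I would present it first.

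The main obstacle is the lower ratio bound. We must choose the comparison point $y_+$ with $r\ge1$ and use the monotonicity of $\phi$ so that the $a\log$ term has the right sign. The boundary-exclusion argument for interiority should also be stated carefully, so that the stationarity equations hold without complementary-slackness terms.
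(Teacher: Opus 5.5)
Your proof is correct and follows essentially the paper's route. The shared ingredients are the blow-up of $D_{\mathrm{KL}}(q\|p)$ at the boundary plus strict concavity (for an interior, unique maximizer), the comparison $\mathcal F(p^\star)\ge\mathcal F(q)$ (for reward improvement), and the stationarity condition with the strictly increasing map $r\mapsto a\log r-b/r$ together with ratios on both sides of $1$ (for the ratio bounds). The paper first pins the multiplier into $[\min_y R(y),\max_y R(y)]$ and then splits the signs of $h(u)=a\log u+b(1-1/u)$, whereas you compare each $y$ directly against the anchors $y_\pm$; the two are equivalent. One small fix: your ``we may take $r(y)\ge r(y_-)$'' is justified because the upper bound is trivial when $r(y)<1$, not by monotonicity of $\phi$.
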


Under this idealized objective, Theorem~\ref{thm:teacher_preservation} establishes that bidirectional KL regularization permits expected reward improvement while bounding deviation from the teacher distribution. Specifically, forward KL provides a lower bound on the probability retained for teacher-supported behaviors, whereas reverse KL limits their relative amplification. These complementary constraints support task improvement while preserving coverage of teacher behaviors, providing a theoretical rationale for combining online teacher guidance with persistent demonstration supervision. The proof is provided in Appendix~\ref{app:teacher_preservation}.


\section{Experiments}
\label{sec:experiments}

\subsection{Experimental Setup}

\paragraph{Datasets and Models.}
We use the ToolRL datasets~\citep{qian2025}, comprising 400 expert examples for cold-start SFT and 4,000 examples from ToolACE~\citep{liu2025}, Hammer~\citep{lin2025}, and xLAM~\citep{zhang2025} for subsequent training. Qwen2.5-Instruct~\citep{yang2024} students of 0.5B, 1.5B, and 3B are paired with frozen 1.5B, 7B, and 14B teachers, respectively. We evaluate tool use on BFCL-v4~\citep{patil2025} and API-Bank~\citep{li2023}, and general reasoning on GSM8K~\citep{cobbe2021} and HotpotQA~\citep{yang2018}.

\begingroup
\brokenpenalty=0
\paragraph{Baselines.}
We consider six representative baselines: (1) \textbf{Vanilla:} the original instruction-tuned model; (2) \textbf{SFT:} supervised learning from provided responses and tool-use trajectories; (3) \textbf{GRPO:} RL starting from Vanilla; (4) \textbf{SFT+GRPO:} GRPO following expert cold-start initialization; (5) \textbf{offline KD:} supervision from teacher-generated reference trajectories; and (6) \textbf{OPD:} online distillation from frozen-teacher feedback on student-generated trajectories. Neither GRPO nor SFT+GRPO uses online teacher guidance or auxiliary SFT during policy optimization, allowing comparison with reward-based learning under different initializations. Appendix~\ref{sec:implementation_details} provides data splits, training settings, and details of the baseline training objectives.

\par
\endgroup

\begin{table}[!t]
  \centering
  \small
  \renewcommand{\arraystretch}{1.1}
  \setlength{\tabcolsep}{4pt}

  \begin{tabular*}{0.95\columnwidth}{@{\extracolsep{\fill}}lcc@{}}
    \toprule
    \textbf{Method}
    & \textbf{BFCL-v4}
    & \textbf{API-Bank} \\
    \midrule

    Always-on guidance
    & 16.51
    & 39.53 \\

    Random intervention
    & 13.29
    & 42.21 \\

    \ours
    & \textbf{18.99}
    & \textbf{46.06} \\

    \bottomrule
  \end{tabular*}

  \caption{
  Comparison of online teacher-intervention strategies for Qwen2.5-0.5B
  on BFCL-v4 and API-Bank. The variants retain the same formulation of teacher
  feedback and other training components, while applying guidance to all
  trajectories, a random subset, or low-reward trajectories. The best result
  on each benchmark is shown in bold.
}
  \label{tab:teacher_intervention_strategy}
\end{table}

\begin{figure}[b]
    \centering
    \includegraphics[width=\columnwidth]{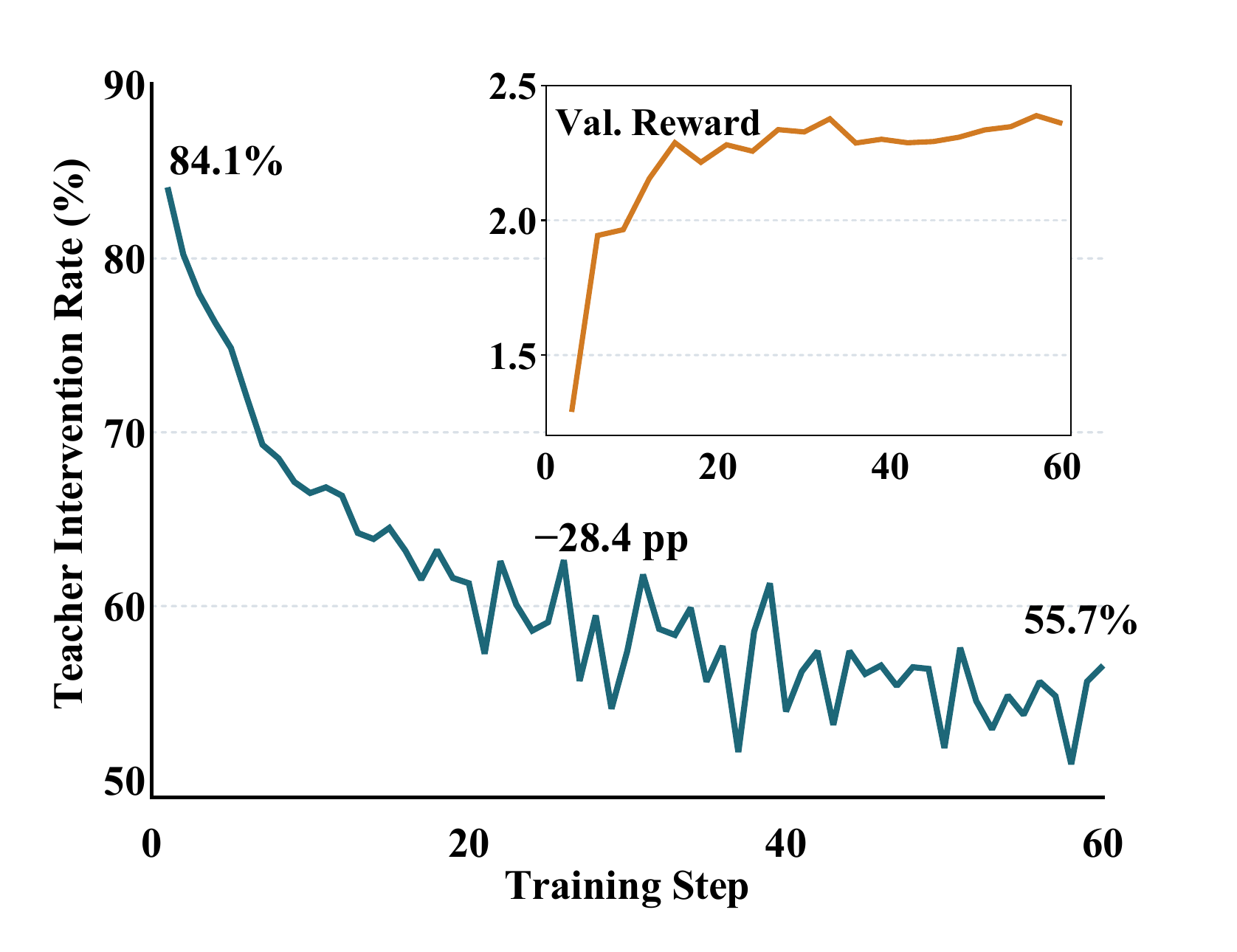}
    \caption{
    Teacher-intervention rate during \ours training.
    The inset shows the validation reward.
    }
    \label{fig:teacher_intervention_dynamics}
\end{figure}

\subsection{Main Results}
\label{sec:main_results}

Table~\ref{table:main_experiment} compares tool-use performance at three student scales on BFCL-v4 and API-Bank. \ours achieves the highest overall student scores at every scale on both benchmarks. For the 0.5B student, it improves BFCL-v4 accuracy by 8.32 percentage points over GRPO and 2.82 points over pure OPD. These gains may reflect the complementary roles of student exploration and teacher supervision. Student-generated attempts extend learning beyond fixed SFT demonstrations, while teacher guidance supplies information that limited GRPO exploration can miss. The comparison with pure OPD further suggests the value of task-based optimization in turning teacher knowledge into effective tool-use behavior. The 3B student also exceeds its 14B teacher on API-Bank, achieving 63.82\% versus 62.48\%. Gains vary by scale and category; margins over the strongest baselines are small for 0.5B on API-Bank and 3B on BFCL-v4.

\begin{figure}[tbp]
    \centering
    \includegraphics[width=\linewidth]{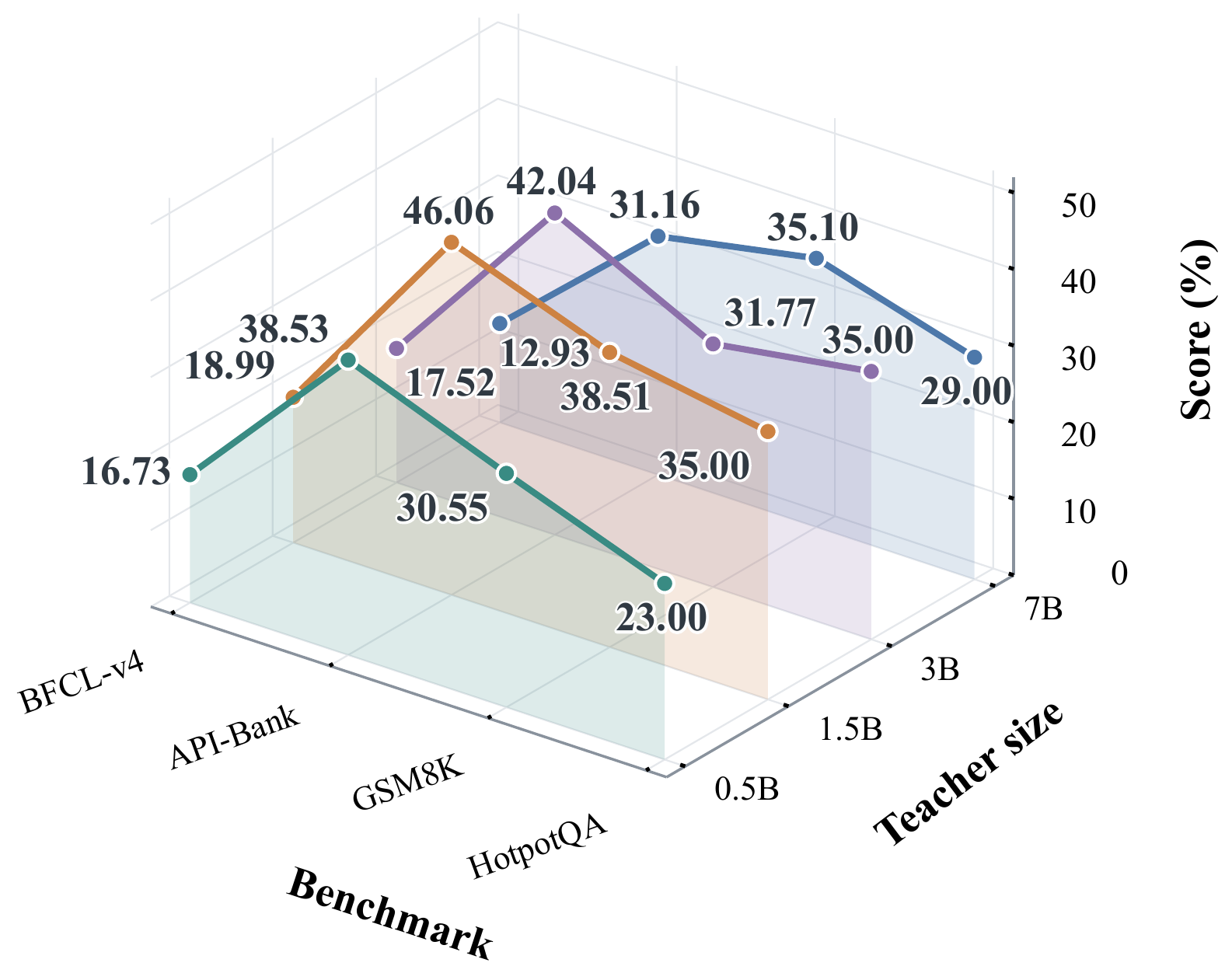}
    \caption{
    Performance of a Qwen2.5-0.5B student guided by teachers
    of different sizes across four benchmarks.
    }
    \label{fig:ablation_teacher}
\end{figure}

\begin{figure}[b]
    \centering
    \includegraphics[width=\columnwidth]{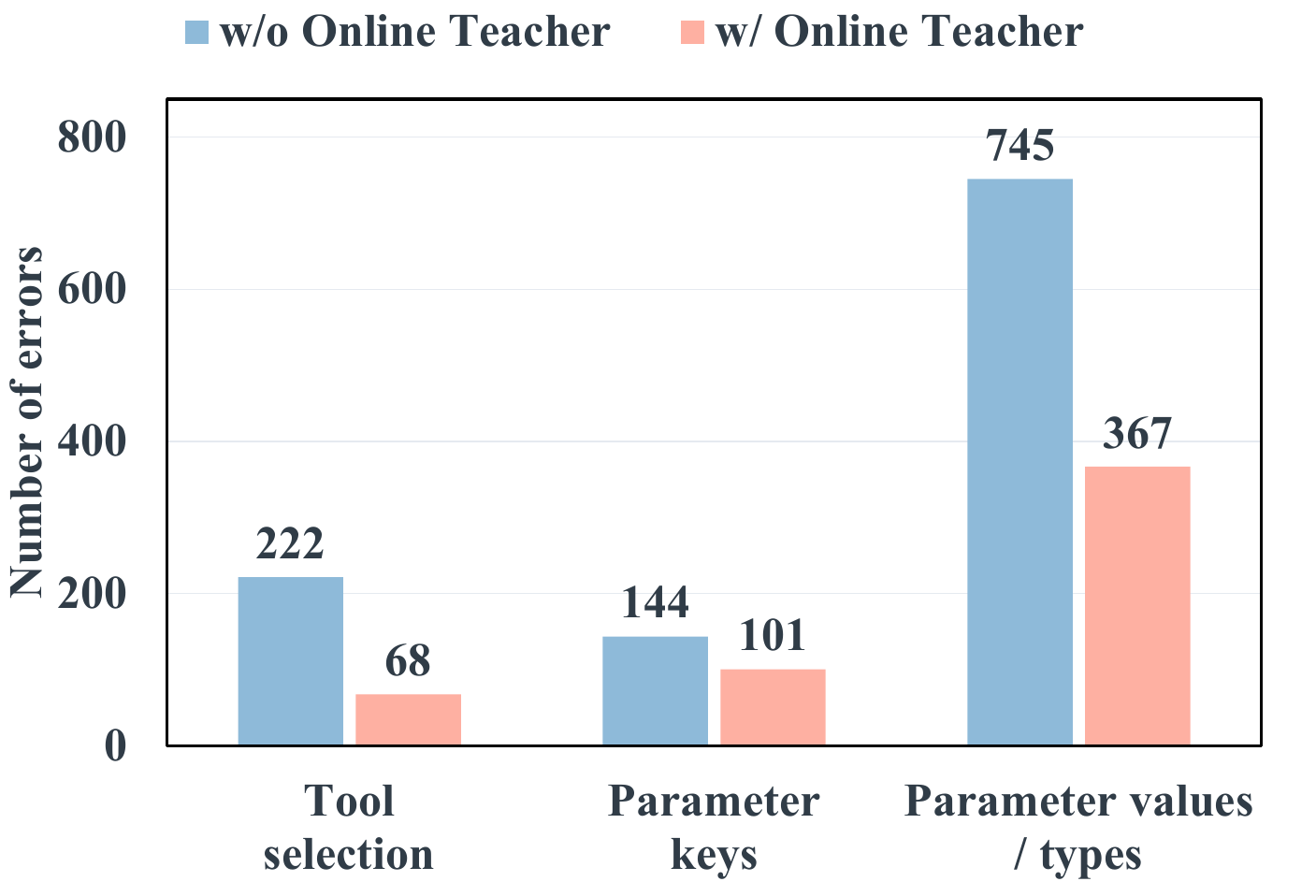}
    \caption{
        Counts of tool-selection and argument-construction errors for Qwen2.5-0.5B
        on BFCL-v4. Blue and coral denote the comparison model without online
        teacher guidance and \ours, respectively.
    }
    \label{fig:online_teacher_error_counts}
\end{figure}

\begin{table}[tbp]
  \centering
  \newsavebox{\reasoningtablebox}
\footnotesize
\setlength{\tabcolsep}{3.5pt}
\renewcommand{\arraystretch}{1.08}
\begin{lrbox}{\reasoningtablebox}%
\begin{tabular}{l c >{\hspace{8pt}}c}
\toprule
\textbf{Model / Method} & \textbf{GSM8K} & \textbf{HotpotQA} \\
\midrule
\rowcolor{teacherbluegray}
Qwen2.5-1.5B-Instruct & 60.88 & 46.0 \\
\midrule
\textbf{Qwen2.5-0.5B-Instruct} & 31.99 & 3.0 \\
\quad + SFT & 34.87 & 23.0 \\
\quad + GRPO & 35.03 & 10.0 \\
\quad + SFT+GRPO & 38.21 & 18.0 \\
\quad + KD & 24.49 & 18.0 \\
\quad + OPD & 34.27 & 14.0 \\
\rowcolor{oursorange}
\quad + \ours & \textbf{38.51} & \textbf{35.0} \\
\addlinespace[3pt]
\midrule
\rowcolor{teacherbluegray}
Qwen2.5-7B-Instruct & 71.42 & 63.0 \\
\midrule
\textbf{Qwen2.5-1.5B-Instruct} & 56.71 & 4.0 \\
\quad + SFT & 53.53 & 34.0 \\
\quad + GRPO & 55.95 & 34.0 \\
\quad + SFT+GRPO & 57.01 & 26.0 \\
\quad + KD & 47.16 & 44.0 \\
\quad + OPD & 48.07 & 40.0 \\
\rowcolor{oursorange}
\quad + \ours & \textbf{60.88} & \textbf{46.0} \\
\addlinespace[3pt]
\midrule
\rowcolor{teacherbluegray}
Qwen2.5-14B-Instruct & 61.26 & 72.0 \\
\midrule
\textbf{Qwen2.5-3B-Instruct} & 45.79 & 46.0 \\
\quad + SFT & 44.28 & 17.0 \\
\quad + GRPO & 47.08 & 44.0 \\
\quad + SFT+GRPO & 45.56 & 32.0 \\
\quad + KD & 54.74 & 39.0 \\
\quad + OPD & 52.77 & \textbf{49.0} \\
\rowcolor{oursorange}
\quad + \ours & \textbf{58.91} & 39.0 \\
\bottomrule
\end{tabular}%
\end{lrbox}%
\begin{minipage}{\wd\reasoningtablebox}
\centering
\usebox{\reasoningtablebox}
\caption{General reasoning results on GSM8K and HotpotQA. \tablecolorlegend{teacherbluegray}~Blue-gray rows denote each group's teacher; \tablecolorlegend{oursorange}~pale orange rows highlight \ours. \textbf{Bold} marks the best student score within each group.}
\label{table:reasoning_results}
\end{minipage}
\end{table}

\subsection{Further Analysis}
\label{sec:further_analysis}

\paragraph{\normalfont\itshape Allocation of Teacher Guidance.}
\label{sec:teacher_allocation}
\textbf{How should teacher guidance be allocated during training?}
Table~\ref{tab:teacher_intervention_strategy} compares always-on, budget-matched random, and low-reward teacher guidance for the 0.5B student. \ours performs best on BFCL-v4 and API-Bank with the feedback formulation and other training components held fixed. Always-on guidance may constrain high-reward attempts, while random allocation may overlook unsuccessful attempts that need additional learning signals. Selective guidance concentrates feedback on low-reward attempts while retaining task-based online updates elsewhere. Figure~\ref{fig:teacher_intervention_dynamics} shows the intervention rate and validation reward during training. Intervention decreases from 84.1\% to 55.7\% while validation reward generally rises. With a fixed threshold, this pattern suggests that guidance adapts to the student's improving performance while continuing to target low-reward attempts.

\paragraph{\normalfont\itshape Impact of Teacher Size.}
\label{sec:teacher_scale_details}
\textbf{How does teacher size affect student performance?}
Figure~\ref{fig:ablation_teacher} compares a Qwen2.5-0.5B student guided by teachers
of different sizes across four benchmarks, with detailed scores in
Appendix~\ref{sec:teacher_selection_results}. The 1.5B teacher yields the best results
on BFCL-v4, API-Bank, and GSM8K, and ties with the 3B teacher on HotpotQA. Student
performance does not improve monotonically across these teacher sizes. The 1.5B teacher's advantage may reflect a balance between sufficient capability and a manageable teacher--student capacity gap, facilitating knowledge transfer.

\paragraph{\normalfont\itshape General Reasoning Performance.}
\label{sec:general_reasoning_results}
\textbf{How does the trained model perform on general reasoning tasks?}
Table~\ref{table:reasoning_results} compares GSM8K and HotpotQA performance after
tool-use training. \ours achieves the highest GSM8K scores among the compared student
methods at all three scales, and leads on HotpotQA at 0.5B and 1.5B. At these two
smaller scales, \ours also exceeds pure OPD on both tasks. For the 1.5B student, GSM8K accuracy increases from 48.07\% with OPD to 60.88\% with \ours. The gains across arithmetic and multi-hop question answering suggest that the benefits of tool-use training can extend beyond adherence to the required response structure.

\begingroup
\setlength{\intextsep}{2pt}
\begin{table}[!t]
  \centering
  \small
  \renewcommand{\arraystretch}{1.0}
  \setlength{\abovecaptionskip}{2pt}
  \setlength{\tabcolsep}{3pt}

  \begin{tabular*}{0.95\columnwidth}{@{\extracolsep{\fill}}lcc@{}}
    \toprule
    \textbf{Method}
    & \textbf{BFCL-v4}
    & \textbf{API-Bank} \\
    \midrule

    \ours
    & \textbf{18.99}
    & \textbf{46.06} \\

    Trajectory-level signal
    & 7.79\,\ablationchange{-11.20}
    & 32.50\,\ablationchange{-13.56} \\

    w/o SFT loss
    & 16.05\,\ablationchange{-2.94}
    & 37.35\,\ablationchange{-8.71} \\

    w/o Online Teacher
    & 10.78\,\ablationchange{-8.21}
    & 39.36\,\ablationchange{-6.70} \\

    GRPO+OPD
    & 15.85\,\ablationchange{-3.14}
    & 45.23\,\ablationchange{-0.83} \\

    \bottomrule
  \end{tabular*}

  \caption{
    Ablation results on BFCL-v4 and API-Bank.
    Values in parentheses indicate the percentage-point
    change relative to \ours.
  }
  \label{table:ablation_mechanism}
\end{table}
\endgroup

\begin{figure}[t]
    \centering
    \includegraphics[width=\columnwidth]{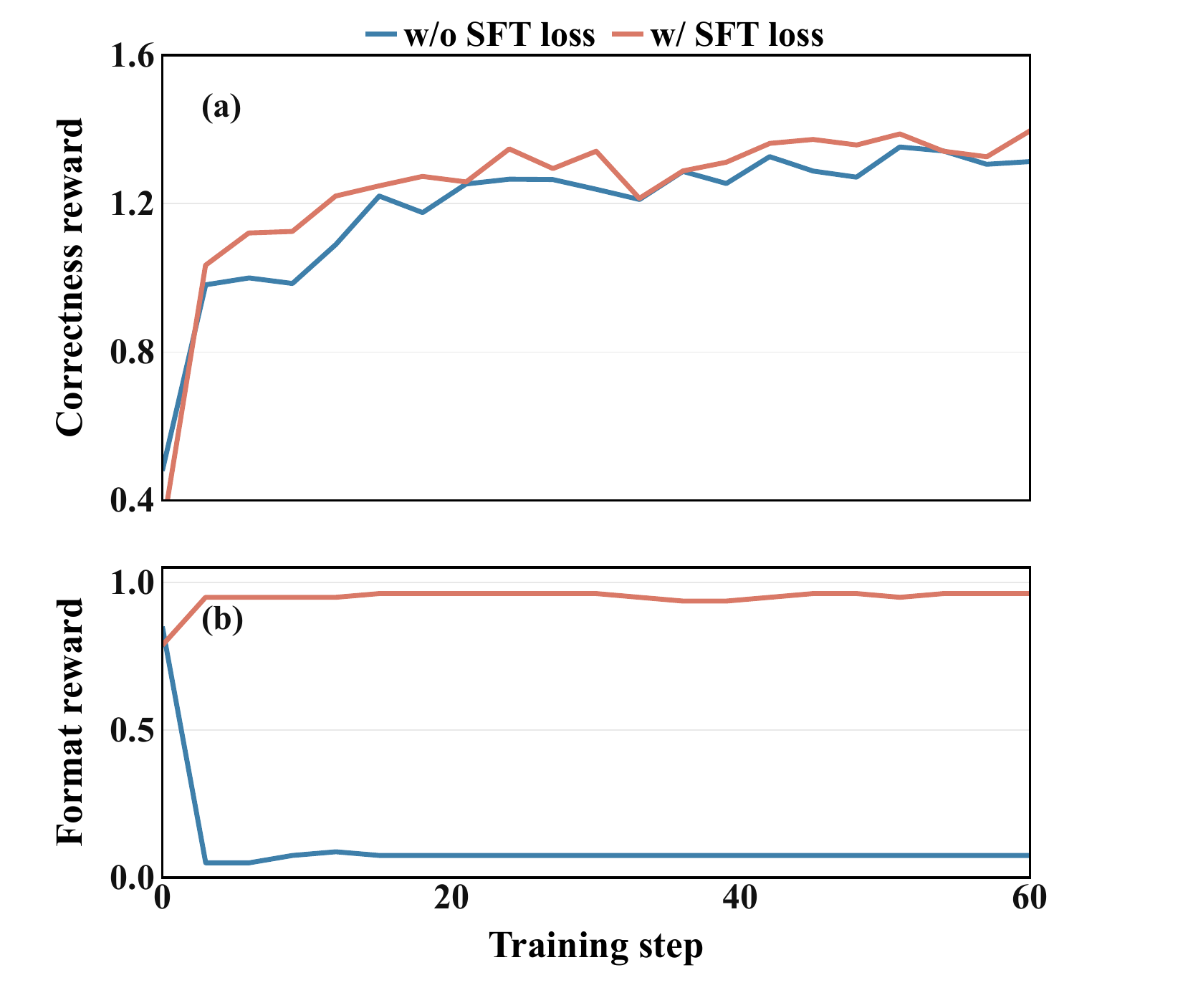}
    \caption{
        Training dynamics of \ours with and without the auxiliary SFT loss:
        (a) correctness reward and (b) format reward.
    }
    \label{fig:auxiliary_sft_supervision}
\end{figure}

\subsection{Ablation Study}
\label{sec:ablation_study}

\paragraph{Component Ablations}
\label{sec:component_ablations}
Table~\ref{table:ablation_mechanism} examines token-wise refinement, online teacher scoring, and auxiliary SFT. Replacing token-dependent feedback with a shared trajectory-level signal produces the largest decline, suggesting that teacher guidance should distinguish decisions within a tool-use attempt. Removing online teacher guidance eliminates reverse-KL feedback, while removing auxiliary SFT eliminates forward-KL demonstration supervision. Both changes reduce performance. GRPO+OPD, which adds a separate online distillation loss to GRPO, also scores lower on both benchmarks. Under the idealized objective in Theorem~\ref{thm:teacher_preservation}, reverse KL limits relative probability amplification, while forward KL provides a lower bound on teacher-behavior retention. \textbf{These results favor combining reverse-KL guidance for improving student attempts with forward-KL supervision for retaining demonstrated tool-use behaviors.}

\paragraph{Online Teacher Guidance}
\label{sec:online_teacher_error_analysis}
Figure~\ref{fig:online_teacher_error_counts} compares BFCL-v4 errors in tool selection, parameter keys, and parameter values/types for Qwen2.5-0.5B. \ours reduces all three error categories relative to the variant without online teacher guidance, with the largest reduction in parameter values/types. On the same test instances, \ours fully corrects 178 failures of the variant without online teacher guidance, including 157 involving argument construction. Reverse-KL feedback evaluates sampled tokens under student-generated prefixes, supplying token-level information beyond trajectory-level rewards. \textbf{These reductions suggest that reverse-KL guidance helps students learn more reliable tool-use behaviors from unsuccessful attempts.}

\paragraph{Auxiliary SFT Supervision}
\label{sec:auxiliary_sft_supervision}
Figure~\ref{fig:auxiliary_sft_supervision} compares training rewards with and without auxiliary SFT. With auxiliary SFT, correctness rewards are generally higher, while format rewards quickly rise and remain high. Without it, format rewards drop sharply and remain low despite improving correctness. In this setting, correctness gains alone do not ensure valid response structure. This pattern is consistent with the forward-KL preservation bound under the idealized objective. \textbf{Persistent forward-KL supervision helps preserve structured tool-use behavior throughout RL training.}

\section{Conclusion}

We proposed \ours to help SLMs learn tool-use behaviors that are difficult to discover through autonomous exploration. Selective reverse-KL teacher guidance improves students' own exploration. Persistent forward-KL demonstration supervision preserves teacher-supported behaviors. Under an idealized bidirectional-KL objective, we establish, at optimality, expected reward non-degradation relative to the teacher and a lower bound on teacher-behavior retention. Experiments on BFCL-v4 and API-Bank show consistent tool-use gains across three student scales, with fewer tool-selection and argument-construction errors and more stable response formats.

\section*{Limitations}

Our study focuses on Qwen2.5 students with 0.5B--3B parameters and two tool-use benchmarks. Applicability across other model families, tool domains, and long-horizon interactive tasks requires further evaluation. Teacher intervention depends on task rewards computed by a deterministic checker of response structure and reference-call correctness. Consequently, extending the framework to open-ended environments without reference calls requires reward designs that support both policy learning and guidance allocation. Online feedback requires access to a frozen teacher's token probabilities, while teacher scoring and reference generation add computational costs.

\bibliography{references}

@inproceedings{yu2025,
  author       = {Yuanqing Yu and
                  Zhefan Wang and
                  Weizhi Ma and
                  Shuai Wang and
                  Chuhan Wu and
                  Zhiqiang Guo and
                  Min Zhang},
  editor       = {Meeyoung Cha and
                  Chanyoung Park and
                  Noseong Park and
                  Carl Yang and
                  Senjuti Basu Roy and
                  Jessie Li and
                  Jaap Kamps and
                  Kijung Shin and
                  Bryan Hooi and
                  Lifang He},
  title        = {{StepTool}: Enhancing Multi-Step Tool Usage in {LLMs} via Step-Grained
                  Reinforcement Learning},
  booktitle    = {Proceedings of the 34th {ACM} International Conference on Information
                  and Knowledge Management, {CIKM} 2025, Seoul, Republic of Korea, November
                  10-14, 2025},
  pages        = {3952--3962},
  publisher    = {{ACM}},
  year         = {2025},
  url          = {https://doi.org/10.1145/3746252.3761391},
  doi          = {10.1145/3746252.3761391},
  bibsource    = {dblp computer science bibliography, https://dblp.org}
}

@article{lyu2025,
  author       = {Yuanjie Lyu and
                  Chengyu Wang and
                  Jun Huang and
                  Tong Xu},
  title        = {From Correction to Mastery: Reinforced Distillation of Large Language
                  Model Agents},
  journal      = {CoRR},
  volume       = {abs/2509.14257},
  year         = {2025},
  url          = {https://doi.org/10.48550/arXiv.2509.14257},
  doi          = {10.48550/ARXIV.2509.14257},
  eprinttype   = {arXiv},
  eprint       = {2509.14257},
  bibsource    = {dblp computer science bibliography, https://dblp.org}
}

@article{shao2024,
  author       = {Zhihong Shao and
                  Peiyi Wang and
                  Qihao Zhu and
                  Runxin Xu and
                  Junxiao Song and
                  Mingchuan Zhang and
                  Y. K. Li and
                  Y. Wu and
                  Daya Guo},
  title        = {{DeepSeekMath}: Pushing the Limits of Mathematical Reasoning in Open
                  Language Models},
  journal      = {CoRR},
  volume       = {abs/2402.03300},
  year         = {2024},
  url          = {https://doi.org/10.48550/arXiv.2402.03300},
  doi          = {10.48550/ARXIV.2402.03300},
  eprinttype    = {arXiv},
  eprint       = {2402.03300},
  bibsource    = {dblp computer science bibliography, https://dblp.org}
}

@inproceedings{bengio2015,
  author       = {Samy Bengio and
                  Oriol Vinyals and
                  Navdeep Jaitly and
                  Noam Shazeer},
  editor       = {Corinna Cortes and
                  Neil D. Lawrence and
                  Daniel D. Lee and
                  Masashi Sugiyama and
                  Roman Garnett},
  title        = {Scheduled Sampling for Sequence Prediction with Recurrent Neural Networks},
  booktitle    = {Advances in Neural Information Processing Systems 28: Annual Conference
                  on Neural Information Processing Systems 2015, December 7-12, 2015,
                  Montreal, Quebec, Canada},
  pages        = {1171--1179},
  year         = {2015},
  url          = {https://proceedings.neurips.cc/paper/2015/hash/e995f98d56967d946471af29d7bf99f1-Abstract.html},
  bibsource    = {dblp computer science bibliography, https://dblp.org}
}

@inproceedings{agarwal2024,
  author       = {Rishabh Agarwal and
                  Nino Vieillard and
                  Yongchao Zhou and
                  Piotr Stanczyk and
                  Sabela Ramos Garea and
                  Matthieu Geist and
                  Olivier Bachem},
  title        = {On-Policy Distillation of Language Models: Learning from Self-Generated
                  Mistakes},
  booktitle    = {The Twelfth International Conference on Learning Representations,
                  {ICLR} 2024, Vienna, Austria, May 7-11, 2024},
  publisher    = {OpenReview.net},
  year         = {2024},
  url          = {https://openreview.net/forum?id=3zKtaqxLhW},
  bibsource    = {dblp computer science bibliography, https://dblp.org}
}

@inproceedings{shenfeld2026,
  author       = {Idan Shenfeld and
                  Jyothish Pari and
                  Pulkit Agrawal},
  editor       = {Carl Vondrick and
                  Bharath Hariharan and
                  Colin Raffel and
                  Lerrel Pinto and
                  Diyi Yang and
                  Aleksandra Faust},
  title        = {{RL}'s {Razor}: Why Online Reinforcement Learning Forgets Less},
  booktitle    = {The Fourteenth International Conference on Learning Representations,
                  {ICLR} 2026, Rio de Janeiro, Brazil, April 23-27, 2026},
  publisher    = {proceedings.iclr.cc},
  year         = {2026},
  url          = {https://proceedings.iclr.cc/paper\_files/paper/2026/hash/618c95f4557c15b253fb0e6f548ea0c0-Abstract-Conference.html},
  bibsource    = {dblp computer science bibliography, https://dblp.org}
}

@inproceedings{gu2024,
  author       = {Yuxian Gu and
                  Li Dong and
                  Furu Wei and
                  Minlie Huang},
  title        = {{MiniLLM}: Knowledge Distillation of Large Language Models},
  booktitle    = {The Twelfth International Conference on Learning Representations,
                  {ICLR} 2024, Vienna, Austria, May 7-11, 2024},
  publisher    = {OpenReview.net},
  year         = {2024},
  url          = {https://openreview.net/forum?id=5h0qf7IBZZ},
  bibsource    = {dblp computer science bibliography, https://dblp.org}
}

@inproceedings{qian2025,
  author       = {Cheng Qian and
                  Emre Can Acikgoz and
                  Qi He and
                  Hongru Wang and
                  Xiusi Chen and
                  Dilek Hakkani{-}Tur and
                  Gokhan Tur and
                  Heng Ji},
  editor       = {Danielle Belgrave and
                  Cheng Zhang and
                  Laura N. Montoya and
                  Hsuan{-}Tien Lin and
                  Razvan Pascanu and
                  Piotr Koniusz and
                  Marzyeh Ghassemi and
                  Nancy Chen and
                  Iv{\'{a}}n Vladimir Meza Ru{\'{\i}}z and
                  Arturo Loaiza{-}Bonilla},
  title        = {{ToolRL}: Reward is All Tool Learning Needs},
  booktitle    = {Advances in Neural Information Processing Systems 38: Annual Conference
                  on Neural Information Processing Systems 2025, NeurIPS 2025, San Diego,
                  CA, USA, December 2-7, 2025 / Mexico City, Mexico, November 30 - December
                  5, 2025},
  year         = {2025},
  url          = {http://papers.nips.cc/paper\_files/paper/2025/hash/97c5b2707228e7e3fb67e4ecc2e0e607-Abstract-Conference.html},
  bibsource    = {dblp computer science bibliography, https://dblp.org}
}

@inproceedings{liu2025,
  author       = {Weiwen Liu and
                  Xu Huang and
                  Xingshan Zeng and
                  Xinlong Hao and
                  Shuai Yu and
                  Dexun Li and
                  Shuai Wang and
                  Weinan Gan and
                  Zhengying Liu and
                  Yuanqing Yu and
                  Zezhong Wang and
                  Yuxian Wang and
                  Wu Ning and
                  Yutai Hou and
                  Bin Wang and
                  Chuhan Wu and
                  Xinzhi Wang and
                  Yong Liu and
                  Yasheng Wang and
                  Duyu Tang and
                  Dandan Tu and
                  Lifeng Shang and
                  Xin Jiang and
                  Ruiming Tang and
                  Defu Lian and
                  Qun Liu and
                  Enhong Chen},
  title        = {{ToolACE}: Winning the Points of {LLM} Function Calling},
  booktitle    = {The Thirteenth International Conference on Learning Representations,
                  {ICLR} 2025, Singapore, April 24-28, 2025},
  publisher    = {OpenReview.net},
  year         = {2025},
  url          = {https://openreview.net/forum?id=8EB8k6DdCU},
  bibsource    = {dblp computer science bibliography, https://dblp.org}
}

@inproceedings{lin2025,
  author       = {Qiqiang Lin and
                  Muning Wen and
                  Qiuying Peng and
                  Guanyu Nie and
                  Junwei Liao and
                  Jun Wang and
                  Xiaoyun Mo and
                  Jiamu Zhou and
                  Cheng Cheng and
                  Yin Zhao and
                  Jun Wang and
                  Weinan Zhang},
  title        = {Robust Function-Calling for On-Device Language Model via Function
                  Masking},
  booktitle    = {The Thirteenth International Conference on Learning Representations,
                  {ICLR} 2025, Singapore, April 24-28, 2025},
  publisher    = {OpenReview.net},
  year         = {2025},
  url          = {https://openreview.net/forum?id=yVQcr4qjD6},
  bibsource    = {dblp computer science bibliography, https://dblp.org}
}

@inproceedings{zhang2025,
  author       = {Jianguo Zhang and
                  Tian Lan and
                  Ming Zhu and
                  Zuxin Liu and
                  Thai Hoang and
                  Shirley Kokane and
                  Weiran Yao and
                  Juntao Tan and
                  Akshara Prabhakar and
                  Haolin Chen and
                  Zhiwei Liu and
                  Yihao Feng and
                  Tulika Manoj Awalgaonkar and
                  Rithesh R. N. and
                  Zeyuan Chen and
                  Ran Xu and
                  Juan Carlos Niebles and
                  Shelby Heinecke and
                  Huan Wang and
                  Silvio Savarese and
                  Caiming Xiong},
  editor       = {Luis Chiruzzo and
                  Alan Ritter and
                  Lu Wang},
  title        = {{xLAM}: {A} Family of Large Action Models to Empower {AI} Agent Systems},
  booktitle    = {Proceedings of the 2025 Conference of the Nations of the Americas
                  Chapter of the Association for Computational Linguistics: Human Language
                  Technologies, {NAACL} 2025 - Volume 1: Long Papers, Albuquerque, New
                  Mexico, USA, April 29 - May 4, 2025},
  pages        = {11583--11597},
  publisher    = {Association for Computational Linguistics},
  year         = {2025},
  url          = {https://doi.org/10.18653/v1/2025.naacl-long.578},
  doi          = {10.18653/V1/2025.NAACL-LONG.578},
  bibsource    = {dblp computer science bibliography, https://dblp.org}
}

@article{yang2024,
  author       = {An Yang and
                  Baosong Yang and
                  Beichen Zhang and
                  Binyuan Hui and
                  Bo Zheng and
                  Bowen Yu and
                  Chengyuan Li and
                  Dayiheng Liu and
                  Fei Huang and
                  Haoran Wei and
                  Huan Lin and
                  Jian Yang and
                  Jianhong Tu and
                  Jianwei Zhang and
                  Jianxin Yang and
                  Jiaxi Yang and
                  Jingren Zhou and
                  Junyang Lin and
                  Kai Dang and
                  Keming Lu and
                  Keqin Bao and
                  Kexin Yang and
                  Le Yu and
                  Mei Li and
                  Mingfeng Xue and
                  Pei Zhang and
                  Qin Zhu and
                  Rui Men and
                  Runji Lin and
                  Tianhao Li and
                  Tingyu Xia and
                  Xingzhang Ren and
                  Xuancheng Ren and
                  Yang Fan and
                  Yang Su and
                  Yichang Zhang and
                  Yu Wan and
                  Yuqiong Liu and
                  Zeyu Cui and
                  Zhenru Zhang and
                  Zihan Qiu},
  title        = {{Qwen2.5} Technical Report},
  journal      = {CoRR},
  volume       = {abs/2412.15115},
  year         = {2024},
  url          = {https://doi.org/10.48550/arXiv.2412.15115},
  doi          = {10.48550/ARXIV.2412.15115},
  eprinttype   = {arXiv},
  eprint       = {2412.15115},
  bibsource    = {dblp computer science bibliography, https://dblp.org}
}

@inproceedings{yang2018,
  author       = {Zhilin Yang and
                  Peng Qi and
                  Saizheng Zhang and
                  Yoshua Bengio and
                  William W. Cohen and
                  Ruslan Salakhutdinov and
                  Christopher D. Manning},
  editor       = {Ellen Riloff and
                  David Chiang and
                  Julia Hockenmaier and
                  Jun'ichi Tsujii},
  title        = {{HotpotQA}: {A} Dataset for Diverse, Explainable Multi-hop Question
                  Answering},
  booktitle    = {Proceedings of the 2018 Conference on Empirical Methods in Natural
                  Language Processing, Brussels, Belgium, October 31 - November 4, 2018},
  pages        = {2369--2380},
  publisher    = {Association for Computational Linguistics},
  year         = {2018},
  url          = {https://doi.org/10.18653/v1/d18-1259},
  doi          = {10.18653/V1/D18-1259},
  bibsource    = {dblp computer science bibliography, https://dblp.org}
}

@article{cobbe2021,
  author       = {Karl Cobbe and
                  Vineet Kosaraju and
                  Mohammad Bavarian and
                  Mark Chen and
                  Heewoo Jun and
                  Lukasz Kaiser and
                  Matthias Plappert and
                  Jerry Tworek and
                  Jacob Hilton and
                  Reiichiro Nakano and
                  Christopher Hesse and
                  John Schulman},
  title        = {Training Verifiers to Solve Math Word Problems},
  journal      = {CoRR},
  volume       = {abs/2110.14168},
  year         = {2021},
  url          = {https://arxiv.org/abs/2110.14168},
  eprinttype    = {arXiv},
  eprint       = {2110.14168},
  bibsource    = {dblp computer science bibliography, https://dblp.org}
}

@inproceedings{patil2025,
  author       = {Shishir G. Patil and
                  Huanzhi Mao and
                  Fanjia Yan and
                  Charlie Cheng{-}Jie Ji and
                  Vishnu Suresh and
                  Ion Stoica and
                  Joseph E. Gonzalez},
  editor       = {Aarti Singh and
                  Maryam Fazel and
                  Daniel Hsu and
                  Simon Lacoste{-}Julien and
                  Felix Berkenkamp and
                  Tegan Maharaj and
                  Kiri Wagstaff and
                  Jerry Zhu},
  title        = {The {Berkeley} Function Calling Leaderboard {(BFCL):} From Tool Use
                  to Agentic Evaluation of Large Language Models},
  booktitle    = {Forty-second International Conference on Machine Learning, {ICML}
                  2025, Vancouver, BC, Canada, July 13-19, 2025},
  series       = {Proceedings of Machine Learning Research},
  volume       = {267},
  pages        = {48371--48392},
  publisher    = {{PMLR} / OpenReview.net},
  year         = {2025},
  url          = {https://proceedings.mlr.press/v267/patil25a.html},
  bibsource    = {dblp computer science bibliography, https://dblp.org}
}

@inproceedings{li2023,
  author       = {Minghao Li and
                  Yingxiu Zhao and
                  Bowen Yu and
                  Feifan Song and
                  Hangyu Li and
                  Haiyang Yu and
                  Zhoujun Li and
                  Fei Huang and
                  Yongbin Li},
  editor       = {Houda Bouamor and
                  Juan Pino and
                  Kalika Bali},
  title        = {{API-Bank}: {A} Comprehensive Benchmark for Tool-Augmented {LLMs}},
  booktitle    = {Proceedings of the 2023 Conference on Empirical Methods in Natural
                  Language Processing, {EMNLP} 2023, Singapore, December 6-10, 2023},
  pages        = {3102--3116},
  publisher    = {Association for Computational Linguistics},
  year         = {2023},
  url          = {https://doi.org/10.18653/v1/2023.emnlp-main.187},
  doi          = {10.18653/V1/2023.EMNLP-MAIN.187},
  bibsource    = {dblp computer science bibliography, https://dblp.org}
}

@inproceedings{erdogan2024,
  author       = {Lutfi Eren Erdogan and
                  Nicholas Lee and
                  Siddharth Jha and
                  Sehoon Kim and
                  Ryan Tabrizi and
                  Suhong Moon and
                  Coleman Richard Charles Hooper and
                  Gopala Anumanchipalli and
                  Kurt Keutzer and
                  Amir Gholami},
  editor       = {Delia Iraz{\'{u}} Hern{\'{a}}ndez Far{\'{\i}}as and
                  Tom Hope and
                  Manling Li},
  title        = {{TinyAgent}: Function Calling at the Edge},
  booktitle    = {Proceedings of the 2024 Conference on Empirical Methods in Natural
                  Language Processing: {EMNLP} 2024 - System Demonstrations, Miami,
                  Florida, USA, November 12-16, 2024},
  pages        = {80--88},
  publisher    = {Association for Computational Linguistics},
  year         = {2024},
  url          = {https://doi.org/10.18653/v1/2024.emnlp-demo.9},
  doi          = {10.18653/V1/2024.EMNLP-DEMO.9},
  bibsource    = {dblp computer science bibliography, https://dblp.org}
}

@inproceedings{liu2024,
  author       = {Zuxin Liu and
                  Thai Hoang and
                  Jianguo Zhang and
                  Ming Zhu and
                  Tian Lan and
                  Shirley Kokane and
                  Juntao Tan and
                  Weiran Yao and
                  Zhiwei Liu and
                  Yihao Feng and
                  Rithesh Murthy and
                  Liangwei Yang and
                  Silvio Savarese and
                  Juan Carlos Niebles and
                  Huan Wang and
                  Shelby Heinecke and
                  Caiming Xiong},
  editor       = {Amir Globersons and
                  Lester Mackey and
                  Danielle Belgrave and
                  Angela Fan and
                  Ulrich Paquet and
                  Jakub M. Tomczak and
                  Cheng Zhang},
  title        = {{APIGen}: Automated {PIpeline} for Generating Verifiable and Diverse Function-Calling
                  Datasets},
  booktitle    = {Advances in Neural Information Processing Systems 37: Annual Conference
                  on Neural Information Processing Systems 2024, NeurIPS 2024, Vancouver,
                  BC, Canada, December 10 - 15, 2024},
  year         = {2024},
  url          = {http://papers.nips.cc/paper\_files/paper/2024/hash/61cce86d180b1184949e58939c4f983d-Abstract-Datasets\_and\_Benchmarks\_Track.html},
  bibsource    = {dblp computer science bibliography, https://dblp.org}
}

@inproceedings{prabhakar2025,
  author       = {Akshara Prabhakar and
                  Zuxin Liu and
                  Ming Zhu and
                  Jianguo Zhang and
                  Tulika Manoj Awalgaonkar and
                  Shiyu Wang and
                  Zhiwei Liu and
                  Haolin Chen and
                  Thai Hoang and
                  Juan Carlos Niebles and
                  Shelby Heinecke and
                  Weiran Yao and
                  Huan Wang and
                  Silvio Savarese and
                  Caiming Xiong},
  editor       = {Danielle Belgrave and
                  Cheng Zhang and
                  Laura N. Montoya and
                  Hsuan{-}Tien Lin and
                  Razvan Pascanu and
                  Piotr Koniusz and
                  Marzyeh Ghassemi and
                  Nancy Chen and
                  Iv{\'{a}}n Vladimir Meza Ru{\'{\i}}z and
                  Arturo Loaiza{-}Bonilla},
  title        = {{APIGen-MT}: Agentic Pipeline for Multi-Turn Data Generation via Simulated
                  Agent-Human Interplay},
  booktitle    = {Advances in Neural Information Processing Systems 38: Annual Conference
                  on Neural Information Processing Systems 2025, NeurIPS 2025, San Diego,
                  CA, USA, December 2-7, 2025 / Mexico City, Mexico, November 30 - December
                  5, 2025},
  year         = {2025},
  url          = {http://papers.nips.cc/paper\_files/paper/2025/hash/5e3661f7fe4c8ac5652d62eb3d3c96ea-Abstract-Datasets\_and\_Benchmarks\_Track.html},
  bibsource    = {dblp computer science bibliography, https://dblp.org}
}

@inproceedings{zhang2026a,
  author       = {Shaokun Zhang and
                  Yi Dong and
                  Jieyu Zhang and
                  Jan Kautz and
                  Bryan Catanzaro and
                  Andrew Tao and
                  Qingyun Wu and
                  Zhiding Yu and
                  Guilin Liu},
  editor       = {Carl Vondrick and
                  Bharath Hariharan and
                  Colin Raffel and
                  Lerrel Pinto and
                  Diyi Yang and
                  Aleksandra Faust},
  title        = {{Nemotron-Research-Tool-N1}: Exploring Tool-Using Language Models with
                  Reinforced Reasoning},
  booktitle    = {The Fourteenth International Conference on Learning Representations,
                  {ICLR} 2026, Rio de Janeiro, Brazil, April 23-27, 2026},
  publisher    = {proceedings.iclr.cc},
  year         = {2026},
  url          = {https://proceedings.iclr.cc/paper\_files/paper/2026/hash/93b3d975f9a2448964a906199db98a9d-Abstract-Conference.html},
  bibsource    = {dblp computer science bibliography, https://dblp.org}
}

@inproceedings{li2026,
  author       = {Xiaoxi Li and
                  Wenxiang Jiao and
                  Jiarui Jin and
                  Guanting Dong and
                  Jiajie Jin and
                  Yinuo Wang and
                  Hao Wang and
                  Yutao Zhu and
                  Ji{-}Rong Wen and
                  Yuan Lu and
                  Zhicheng Dou},
  editor       = {Hakim Hacid and
                  Yoelle Maarek and
                  Francesco Bonchi and
                  Ido Guy and
                  Emine Yilmaz},
  title        = {{DeepAgent}: {A} General Reasoning Agent with Scalable Toolsets},
  booktitle    = {Proceedings of the {ACM} Web Conference 2026, {WWW} 2026, Dubai, United
                  Arab Emirates, originally scheduled for April 13-17, 2026, rescheduled
                  for June 29 - July 3, 2026},
  pages        = {2219--2230},
  publisher    = {{ACM}},
  year         = {2026},
  url          = {https://doi.org/10.1145/3774904.3792460},
  doi          = {10.1145/3774904.3792460},
  bibsource    = {dblp computer science bibliography, https://dblp.org}
}

@article{acikgoz2026,
  author       = {Emre Can Acikgoz and
                  Cheng Qian and
                  Jonas H{\"{u}}botter and
                  Heng Ji and
                  Dilek Hakkani{-}T{\"{u}}r and
                  Gokhan Tur},
  title        = {{Tool-R0}: Self-Evolving {LLM} Agents for Tool-Learning from Zero Data},
  journal      = {CoRR},
  volume       = {abs/2602.21320},
  year         = {2026},
  url          = {https://doi.org/10.48550/arXiv.2602.21320},
  doi          = {10.48550/ARXIV.2602.21320},
  eprinttype   = {arXiv},
  eprint       = {2602.21320},
  bibsource    = {dblp computer science bibliography, https://dblp.org}
}

@article{gao2026,
  author       = {Xingjie Gao and
                  Pengcheng Huang and
                  Zhenghao Liu and
                  Yukun Yan and
                  Shuo Wang and
                  Zulong Chen and
                  Chen Qian and
                  Ge Yu and
                  Yu Gu},
  title        = {Teaching {LLMs} to Learn Tool Trialing and Execution through Environment
                  Interaction},
  journal      = {CoRR},
  volume       = {abs/2601.12762},
  year         = {2026},
  url          = {https://doi.org/10.48550/arXiv.2601.12762},
  doi          = {10.48550/ARXIV.2601.12762},
  eprinttype   = {arXiv},
  eprint       = {2601.12762},
  bibsource    = {dblp computer science bibliography, https://dblp.org}
}

@inproceedings{zhang2026b,
  author       = {Kangning Zhang and
                  Weiwen Liu and
                  Wenxiang Jiao and
                  Kounianhua Du and
                  Yuan Lu and
                  Weinan Zhang and
                  Yong Yu},
  editor       = {Maria Liakata and
                  Viviane P. Moreira and
                  Jiajun Zhang and
                  David Jurgens},
  title        = {{LoopTool}: Closing the Data-Training Loop for Robust {LLM} Tool Calls},
  booktitle    = {Proceedings of the 64th Annual Meeting of the Association for Computational
                  Linguistics (Volume 1: Long Papers), {ACL} 2026, San Diego, California,
                  United States, July 2-7, 2026},
  pages        = {21144--21164},
  publisher    = {Association for Computational Linguistics},
  year         = {2026},
  url          = {https://doi.org/10.18653/v1/2026.acl-long.968},
  doi          = {10.18653/V1/2026.ACL-LONG.968},
  bibsource    = {dblp computer science bibliography, https://dblp.org}
}

@inproceedings{kang2025,
  author       = {Minki Kang and
                  Jongwon Jeong and
                  Seanie Lee and
                  Jaewoong Cho and
                  Sung Ju Hwang},
  editor       = {Danielle Belgrave and
                  Cheng Zhang and
                  Laura N. Montoya and
                  Hsuan{-}Tien Lin and
                  Razvan Pascanu and
                  Piotr Koniusz and
                  Marzyeh Ghassemi and
                  Nancy Chen and
                  Iv{\'{a}}n Vladimir Meza Ru{\'{\i}}z and
                  Arturo Loaiza{-}Bonilla},
  title        = {Distilling {LLM} Agent into Small Models with Retrieval and Code Tools},
  booktitle    = {Advances in Neural Information Processing Systems 38: Annual Conference
                  on Neural Information Processing Systems 2025, NeurIPS 2025, San Diego,
                  CA, USA, December 2-7, 2025 / Mexico City, Mexico, November 30 - December
                  5, 2025},
  year         = {2025},
  url          = {http://papers.nips.cc/paper\_files/paper/2025/hash/99263f9bf46874e2b8b7f104b5063864-Abstract-Conference.html},
  bibsource    = {dblp computer science bibliography, https://dblp.org}
}

@inproceedings{shing2025,
  author       = {Makoto Shing and
                  Kou Misaki and
                  Han Bao and
                  Sho Yokoi and
                  Takuya Akiba},
  title        = {{TAID:} Temporally Adaptive Interpolated Distillation for Efficient
                  Knowledge Transfer in Language Models},
  booktitle    = {The Thirteenth International Conference on Learning Representations,
                  {ICLR} 2025, Singapore, April 24-28, 2025},
  publisher    = {OpenReview.net},
  year         = {2025},
  url          = {https://openreview.net/forum?id=cqsw28DuMW},
  bibsource    = {dblp computer science bibliography, https://dblp.org}
}

@inproceedings{tang2026,
  author       = {Guokai Tang and
                  Feng Zhao},
  editor       = {Maria Liakata and
                  Viviane P. Moreira and
                  Jiajun Zhang and
                  David Jurgens},
  title        = {{SmartAD}: Capacity-Aligned Agent Distillation for Small Language Models},
  booktitle    = {Findings of the Association for Computational Linguistics, {ACL} 2026,
                  San Diego, California, United States, July 2-7, 2026},
  pages        = {27045--27057},
  publisher    = {Association for Computational Linguistics},
  year         = {2026},
  url          = {https://doi.org/10.18653/v1/2026.findings-acl.1349},
  doi          = {10.18653/V1/2026.FINDINGS-ACL.1349},
  bibsource    = {dblp computer science bibliography, https://dblp.org}
}

@inproceedings{ni2026,
  author       = {Jiliang Ni and
                  Jiachen Pu and
                  Zhongyi Yang and
                  Jingfeng Luo and
                  Conggang Hu},
  editor       = {Carl Vondrick and
                  Bharath Hariharan and
                  Colin Raffel and
                  Lerrel Pinto and
                  Diyi Yang and
                  Aleksandra Faust},
  title        = {{STAR:} Similarity-guided Teacher-Assisted Refinement for Super-Tiny
                  Function Calling Models},
  booktitle    = {The Fourteenth International Conference on Learning Representations,
                  {ICLR} 2026, Rio de Janeiro, Brazil, April 23-27, 2026},
  publisher    = {proceedings.iclr.cc},
  year         = {2026},
  url          = {https://proceedings.iclr.cc/paper\_files/paper/2026/hash/6c8763fec7634dbaf69d299ddf96687f-Abstract-Conference.html},
  bibsource    = {dblp computer science bibliography, https://dblp.org}
}

@inproceedings{yan2025,
  author       = {Jianhao Yan and
                  Yafu Li and
                  Zican Hu and
                  Zhi Wang and
                  Ganqu Cui and
                  Xiaoye Qu and
                  Yu Cheng and
                  Yue Zhang},
  editor       = {Danielle Belgrave and
                  Cheng Zhang and
                  Laura N. Montoya and
                  Hsuan{-}Tien Lin and
                  Razvan Pascanu and
                  Piotr Koniusz and
                  Marzyeh Ghassemi and
                  Nancy Chen and
                  Iv{\'{a}}n Vladimir Meza Ru{\'{\i}}z and
                  Arturo Loaiza{-}Bonilla},
  title        = {Learning to Reason under Off-Policy Guidance},
  booktitle    = {Advances in Neural Information Processing Systems 38: Annual Conference
                  on Neural Information Processing Systems 2025, NeurIPS 2025, San Diego,
                  CA, USA, December 2-7, 2025 / Mexico City, Mexico, November 30 - December
                  5, 2025},
  year         = {2025},
  url          = {http://papers.nips.cc/paper\_files/paper/2025/hash/a9d5c33e38442450f9781bd7bad3c81e-Abstract-Conference.html},
  bibsource    = {dblp computer science bibliography, https://dblp.org}
}

@inproceedings{ma2026,
  author       = {Lu Ma and
                  Hao Liang and
                  Meiyi Qiang and
                  Lexiang Tang and
                  Xiaochen Ma and
                  Zhen Hao Wong and
                  Junbo Niu and
                  Chengyu Shen and
                  Runming He and
                  Yanhao Li and
                  Bin Cui and
                  Wentao Zhang},
  editor       = {Carl Vondrick and
                  Bharath Hariharan and
                  Colin Raffel and
                  Lerrel Pinto and
                  Diyi Yang and
                  Aleksandra Faust},
  title        = {Learning What Reinforcement Learning Can't: Interleaved Online Fine-Tuning
                  for Hardest Questions},
  booktitle    = {The Fourteenth International Conference on Learning Representations,
                  {ICLR} 2026, Rio de Janeiro, Brazil, April 23-27, 2026},
  publisher    = {proceedings.iclr.cc},
  year         = {2026},
  url          = {https://proceedings.iclr.cc/paper\_files/paper/2026/hash/82c4ad0c3d3d4898a77be5df7b2ede3a-Abstract-Conference.html},
  bibsource    = {dblp computer science bibliography, https://dblp.org}
}

@inproceedings{guo2026,
  author       = {Jiaxin Guo and
                  Hao Sun and
                  Wenhao Zhang and
                  Chunyu Yang and
                  Yan Zhang},
  editor       = {Maria Liakata and
                  Viviane P. Moreira and
                  Jiajun Zhang and
                  David Jurgens},
  title        = {Failures are Treasures: Constructing a Pedagogical Bridge for Agentic
                  Strategy Distillation},
  booktitle    = {Findings of the Association for Computational Linguistics, {ACL} 2026,
                  San Diego, California, United States, July 2-7, 2026},
  pages        = {18808--18823},
  publisher    = {Association for Computational Linguistics},
  year         = {2026},
  url          = {https://doi.org/10.18653/v1/2026.findings-acl.938},
  doi          = {10.18653/V1/2026.FINDINGS-ACL.938},
  bibsource    = {dblp computer science bibliography, https://dblp.org}
}

@article{zhong2026,
  author       = {Qiyong Zhong and
                  Mao Zheng and
                  Mingyang Song and
                  Xin Lin and
                  Jie Sun and
                  Houcheng Jiang and
                  Xiang Wang and
                  Junfeng Fang},
  title        = {{SOD:} Step-wise On-policy Distillation for Small Language Model Agents},
  journal      = {CoRR},
  volume       = {abs/2605.07725},
  year         = {2026},
  url          = {https://doi.org/10.48550/arXiv.2605.07725},
  doi          = {10.48550/ARXIV.2605.07725},
  eprinttype   = {arXiv},
  eprint       = {2605.07725},
  bibsource    = {dblp computer science bibliography, https://dblp.org}
}

@article{xu2025a,
  author       = {Hongling Xu and
                  Qi Zhu and
                  Heyuan Deng and
                  Jinpeng Li and
                  Lu Hou and
                  Yasheng Wang and
                  Lifeng Shang and
                  Ruifeng Xu and
                  Fei Mi},
  title        = {{KDRL:} Post-Training Reasoning {LLMs} via Unified Knowledge Distillation
                  and Reinforcement Learning},
  journal      = {CoRR},
  volume       = {abs/2506.02208},
  year         = {2025},
  url          = {https://doi.org/10.48550/arXiv.2506.02208},
  doi          = {10.48550/ARXIV.2506.02208},
  eprinttype   = {arXiv},
  eprint       = {2506.02208},
  bibsource    = {dblp computer science bibliography, https://dblp.org}
}

@inproceedings{kotoge2026,
  author       = {Rikuto Kotoge and
                  Mai Nishimura and
                  Jiaxin Ma},
  editor       = {Maria Liakata and
                  Viviane P. Moreira and
                  Jiajun Zhang and
                  David Jurgens},
  title        = {Can Compact Language Models Search Like Agents? Distillation-Guided
                  Policy Optimization for Preserving Agentic {RAG} Capabilities},
  booktitle    = {Proceedings of the 64th Annual Meeting of the Association for Computational
                  Linguistics (Volume 1: Long Papers), {ACL} 2026, San Diego, California,
                  United States, July 2-7, 2026},
  pages        = {37733--37746},
  publisher    = {Association for Computational Linguistics},
  year         = {2026},
  url          = {https://doi.org/10.18653/v1/2026.acl-long.1751},
  doi          = {10.18653/V1/2026.ACL-LONG.1751},
  bibsource    = {dblp computer science bibliography, https://dblp.org}
}

@inproceedings{liu2026,
  author       = {Jiazhen Liu and
                  Yuchuan Deng and
                  Long Chen},
  editor       = {Carl Vondrick and
                  Bharath Hariharan and
                  Colin Raffel and
                  Lerrel Pinto and
                  Diyi Yang and
                  Aleksandra Faust},
  title        = {Empowering Small {VLMs} to Think with Dynamic Memorization and Exploration},
  booktitle    = {The Fourteenth International Conference on Learning Representations,
                  {ICLR} 2026, Rio de Janeiro, Brazil, April 23-27, 2026},
  publisher    = {proceedings.iclr.cc},
  year         = {2026},
  url          = {https://proceedings.iclr.cc/paper\_files/paper/2026/hash/82b0c1b954b6ef9f3cfb664a82b201bb-Abstract-Conference.html},
  bibsource    = {dblp computer science bibliography, https://dblp.org}
}

@article{li2026a,
  author       = {Gengsheng Li and
                  Tianyu Yang and
                  Junfeng Fang and
                  Mingyang Song and
                  Mao Zheng and
                  Haiyun Guo and
                  Dan Zhang and
                  Jinqiao Wang and
                  Tat{-}Seng Chua},
  title        = {Unifying Group-Relative and Self-Distillation Policy Optimization
                  via Sample Routing},
  journal      = {CoRR},
  volume       = {abs/2604.02288},
  year         = {2026},
  url          = {https://doi.org/10.48550/arXiv.2604.02288},
  doi          = {10.48550/ARXIV.2604.02288},
  eprinttype   = {arXiv},
  eprint       = {2604.02288},
  bibsource    = {dblp computer science bibliography, https://dblp.org}
}

@article{han2026,
  author       = {Zhuowen Han and
                  Jinwei Xiao and
                  Zhengxi Lu and
                  Renren Jin and
                  Zhiyuan Yao and
                  Yuxin Liu and
                  Hongyan Hao and
                  Yueqing Sun and
                  Yu Yang and
                  Qi Gu and
                  Xunliang Cai and
                  Deyi Xiong},
  title        = {Distill Where You Fail: Recovering Learning Signals of Negative {RL}-Groups
                  from Adaptive Teacher Guidance},
  journal      = {CoRR},
  volume       = {abs/2608.00782},
  year         = {2026},
  url          = {https://doi.org/10.48550/arXiv.2608.00782},
  doi          = {10.48550/ARXIV.2608.00782},
  eprinttype   = {arXiv},
  eprint       = {2608.00782},
  bibsource    = {dblp computer science bibliography, https://dblp.org}
}
\appendix
\flushbottom
\AddToHookNext{shipout/after}{\begingroup\globaldefs=1\raggedbottom\endgroup}
\section{Implementation and Reproducibility Details}
\label{sec:implementation_details}

\subsection{Models, Data, and Hardware}
\label{sec:models_and_data}
\label{sec:environment}

All students and teachers use the Qwen2.5-Instruct series~\citep{yang2024}. The 0.5B, 1.5B, and 3B students are paired with 1.5B, 7B, and 14B teachers trained with \ours, GRPO, and SFT+GRPO, respectively. In the teacher-scale comparison, the 0.5B teacher is trained with \ours, and the 3B teacher follows the same SFT+GRPO recipe as the 14B teacher. Teacher parameters remain frozen during student policy optimization.

The cold-start corpus contains 400 expert reasoning and tool-use demonstrations, split into 360 training and 40 validation instances. The subsequent training corpus contains 2,000 ToolACE~\citep{liu2025}, 1,000 Hammer~\citep{lin2025}, and 1,000 xLAM~\citep{zhang2025} instances, split into 3,920 training and 80 validation instances.

The 0.5B and 1.5B experiments use one and two NVIDIA L40S GPUs, respectively. We use two NVIDIA RTX PRO 6000 Blackwell GPUs for the 3B student and its 14B teacher.

\subsection{Training Configuration}
\label{sec:training_configuration}
\label{sec:cold_start_details}
\label{sec:offline_distillation}
\label{sec:setpo_training}

LoRA-based cold-start SFT initializes student policy optimization, using the settings in Table~\ref{tab:sft_hyperparameters}.

\begin{table}[htbp]
\centering
\small
\begin{tabular}{lc}
\toprule
\textbf{Hyperparameter} & \textbf{Value} \\
\midrule
Learning Rate & $2 \times 10^{-5}$ \\
LR Scheduler & Cosine \\
Total Epochs & 3 \\
Global Batch Size & 32 \\
Micro Batch Size / GPU & 4 \\
Max Sequence Length & 4096 \\
\bottomrule
\end{tabular}
\caption{Cold-start initialization settings.}
\label{tab:sft_hyperparameters}
\end{table}

Before RL, the corresponding frozen teacher generates one reference trajectory for each training prompt using its tokenizer and chat template, greedy decoding, and a maximum output length of 1,024 tokens. These trajectories form the auxiliary SFT dataset. During RL, online teacher feedback uses student-generated prefixes, whereas auxiliary SFT uses complete teacher trajectories and their prefixes. Table~\ref{tab:hyperparameters} gives the policy-optimization settings. We enable gradient checkpointing and report results from the final trained student model.

\begin{table}[htbp]
\centering
\small
\begin{tabular}{lc}
\toprule
\textbf{Hyperparameter} & \textbf{Value} \\
\midrule
Precision & Bfloat16 \\
LoRA Rank ($r$) & 32 \\
LoRA Alpha ($\alpha$) & 64 \\
Actor Learning Rate & $2 \times 10^{-5}$ \\
GRPO Group Size ($G$) & 4 \\
RL Total Epochs & 15 \\
Optimization Steps & 105 \\
Total Train Batch Size & 512 \\
GRPO Mini Batch Size & 64 \\
GRPO Micro Batch Size / GPU & 8 \\
Intervention Threshold ($\gamma$) & 3.99 \\
Teacher-Guidance Weight ($\beta$) & 1.0 \\
Auxiliary SFT Weight ($\lambda$) & 0.1 \\
Max Prompt Length & 2048 \\
Max Response Length & 1024 \\
Max Teacher Scoring Length & 1024 \\
\bottomrule
\end{tabular}
\caption{\ours{} policy-optimization settings.}
\label{tab:hyperparameters}
\end{table}

\subsection{Baselines and Evaluation}
\label{sec:baseline_details}
\label{sec:evaluation_details}

\paragraph{Main Baselines.}
Vanilla is the original Qwen2.5-Instruct model. SFT learns the provided reference responses and tool-use trajectories from the subsequent training corpus. GRPO starts directly from Vanilla, whereas SFT+GRPO uses the cold-start initialization in Section~\ref{sec:training_configuration}; neither uses online teacher guidance or auxiliary SFT during policy optimization. Both follow the main RL data, group size, reward function, and training schedule. Offline KD learns only pre-generated teacher trajectories, while OPD trains on frozen-teacher feedback for student-generated trajectories. The additional 0.5B GRPO+OPD baseline jointly optimizes task-reward GRPO and a separate on-policy distillation loss evaluated under student-generated prefixes: the distillation loss is added to the objective without changing the task-reward GRPO advantages. The distillation loss applies to all sampled trajectories without reward-based gating.

\paragraph{Intervention Controls.}
Always-on guidance applies the same teacher signal as \ours{} to every sampled trajectory. Random intervention selects, at each update, the same number of trajectories as the reward gate would select, matching the teacher-query budget. Both change only which trajectories receive guidance, keeping the other training components and settings unchanged.

\paragraph{Component Ablations.}
Trajectory-level distillation averages $d_{j,t}$ over valid generated tokens in each selected trajectory and shares this correction across its tokens. Without SFT Loss sets $\lambda=0$, retaining task-reward GRPO and selective online guidance. Without Online Teacher retains task-reward GRPO and persistent supervision from complete teacher demonstrations.

API-Bank~\citep{li2023} uses 597 processed examples, with 399, 67, and 131 at Levels 1, 2, and 3, scored by the project's soft-match evaluator. Overall accuracy is computed across all examples, rather than by averaging level accuracies. We report one training run per configuration. General reasoning is evaluated on 1,319 GSM8K test examples~\citep{cobbe2021} and a 100-question HotpotQA subset~\citep{yang2018}.

\subsection{Reward Specification}
\label{sec:reward_details}

The deterministic checker assigns a format reward of 1 when required structural tags, including \texttt{<think>} and \texttt{<tool\_call>}, are present in the correct order, and 0 otherwise. Correctness compares predicted calls $P=\{P_1,\ldots,P_m\}$ with reference calls $G=\{G_1,\ldots,G_n\}$. Let $N_P$ and $N_G$ be their tool-name sets, and let $P_i$ denote the predicted call matched to $G_j$. Tool-name, parameter-name, and parameter-value scores are
\begin{align}
r_{\mathrm{name}}
&=\frac{|N_G\cap N_P|}{|N_G\cup N_P|},\notag\\\displaybreak[1]
r_{\mathrm{param}}
&=\sum_{G_j\in G}
\frac{|\operatorname{keys}(G_j)\cap\operatorname{keys}(P_i)|}
{|\operatorname{keys}(G_j)\cup\operatorname{keys}(P_i)|},\notag\\\displaybreak[1]
r_{\mathrm{value}}
&=\sum_{G_j\in G}\sum_{k\in\operatorname{keys}(G_j)}
\mathbb I[G_j[k]=P_i[k]].
\end{align}
Tool names and parameter names use Jaccard matching, while parameter values use exact matching. The normalization and correctness reward are
\begin{equation}
\begin{aligned}
S_{\max}
&=1+|G|+\sum_{G_j\in G}|\operatorname{keys}(G_j)|,\\
R_{\mathrm{correct}}
&=6\frac{r_{\mathrm{name}}+r_{\mathrm{param}}+r_{\mathrm{value}}}{S_{\max}}-3.
\end{aligned}
\end{equation}
Correctness rewards range from $-3$ to $3$, giving a maximum total reward of 4 when combined with the format reward. Equations~\ref{eq:final_reward} and~\ref{eq:intervention_indicator} define total rewards and online intervention; Table~\ref{tab:hyperparameters} lists the default intervention threshold.

\section{Proof of Theorem~\ref{thm:teacher_preservation}}
\label{app:teacher_preservation}
\begin{proof}
Let $\Delta(\mathcal{Y})$ denote the probability simplex
over $\mathcal{Y}$. For brevity, write
$p_y=p(y)$, $q_y=q(y)$, and $R_y=R(y)$.
All logarithms are natural.
\par\medskip
\noindent
\textbf{Step 1: Existence and uniqueness.}
Under Assumption~\ref{ass:population}, $\mathcal{Y}$
is finite and $q_y>0$ for every $y\in\mathcal{Y}$.
The objective $\mathcal{F}$ is finite at $p=q$
and continuous in the interior of
$\Delta(\mathcal{Y})$.
As $p$ approaches the boundary of the simplex,
at least one coordinate $p_y$ approaches zero.
Since $q_y>0$, this implies
\[
D_{\mathrm{KL}}(q\|p)\longrightarrow+\infty.
\]
The reward is bounded, and both KL divergences
are nonnegative. Therefore,
\[
\mathcal{F}(p)\longrightarrow-\infty
\]
as $p$ approaches the boundary.
By compactness of the simplex, the maximum is
attained at an interior distribution.
For any interior $p$, the Hessian is
\begin{equation}
\label{eq:app_objective_hessian}
\nabla^2\mathcal{F}(p)
=
-\operatorname{diag}_{y\in\mathcal{Y}}
\left(
\frac{a}{p_y}
+
\frac{bq_y}{p_y^2}
\right)
\prec 0.
\end{equation}
Hence, $\mathcal{F}$ is strictly concave on the
interior of the simplex, and its maximizer
$p^\star$ is unique.
\par\medskip
\noindent
\textbf{Step 2: Expected reward improvement.}
The teacher distribution $q$ is feasible and
both regularization terms vanish at $p=q$.
Consequently,
\begin{equation}
\label{eq:app_teacher_comparison}
\mathcal{F}(p^\star)
\geq
\mathcal{F}(q)
=
\mathbb{E}_q[R].
\end{equation}
Expanding $\mathcal{F}(p^\star)$ and rearranging
gives
\begin{equation}
\label{eq:app_reward_improvement}
\begin{aligned}
\mathbb{E}_{p^\star}[R]-\mathbb{E}_q[R]
&\geq
aD_{\mathrm{KL}}(p^\star\|q) \\
&\quad+
bD_{\mathrm{KL}}(q\|p^\star) \\
&\geq 0.
\end{aligned}
\end{equation}
Thus, the optimal distribution achieves an
expected reward no lower than that of the teacher.
\par\medskip
\noindent
\textbf{Step 3: Characterization of the optimal ratios.}
Introduce a Lagrange multiplier $\eta$ for
the normalization constraint $\sum_y p_y=1$.
Since $p^\star$ is interior, the first-order
optimality condition is
\begin{equation}
\label{eq:app_stationarity}
R_y
-a\left(
\log\frac{p_y^\star}{q_y}+1
\right)
+b\frac{q_y}{p_y^\star}
-\eta
=0.
\end{equation}
Define
\[
u_y=\frac{p_y^\star}{q_y},
\qquad
\nu=\eta+a-b,
\]
and
\[
h(u)=a\log u+b\left(1-\frac{1}{u}\right).
\]
Equation~\eqref{eq:app_stationarity} becomes
\begin{equation}
\label{eq:app_ratio_characterization}
R_y-\nu=h(u_y).
\end{equation}
For every $u>0$,
\[
h'(u)=\frac{a}{u}+\frac{b}{u^2}>0,
\qquad h(1)=0.
\]
Thus, $h$ is strictly increasing.
Normalization yields
\begin{equation}
\label{eq:app_ratio_normalization}
\sum_{y\in\mathcal{Y}}q_yu_y=1.
\end{equation}
Because every $q_y$ is positive, at least one
ratio $u_y$ is at most one and at least one is
at least one. Combining this fact with
Eq.~\eqref{eq:app_ratio_characterization}
and $h(1)=0$ gives
\begin{equation}
\label{eq:app_multiplier_range}
\min_{y\in\mathcal{Y}}R_y
\leq \nu \leq
\max_{y\in\mathcal{Y}}R_y.
\end{equation}
Let
\[
\Delta_R
=
\max_{y\in\mathcal{Y}}R_y
-
\min_{y\in\mathcal{Y}}R_y.
\]
It follows that, for every response $y$,
\begin{equation}
\label{eq:app_h_range}
-\Delta_R
\leq R_y-\nu=h(u_y)
\leq \Delta_R.
\end{equation}
\par\medskip
\noindent
\textbf{Step 4: Probability-ratio bounds.}
First consider a response with $u_y\leq 1$.
By Eq.~\eqref{eq:app_h_range},
\begin{equation}
\label{eq:app_lower_ratio_intermediate}
\begin{aligned}
a\log\frac{1}{u_y}
+
b\left(\frac{1}{u_y}-1\right)
&=
\nu-R_y \\
&\leq \Delta_R.
\end{aligned}
\end{equation}
Both terms on the left-hand side are nonnegative.
Discarding the first term gives
\[
b\left(\frac{1}{u_y}-1\right)
\leq \Delta_R,
\]
and therefore
\begin{equation}
\label{eq:app_lower_ratio_bound}
u_y\geq\frac{b}{b+\Delta_R}.
\end{equation}
For $u_y>1$, the same lower bound holds
automatically because $b/(b+\Delta_R)\leq 1$.
Next consider a response with $u_y\geq 1$.
Since $b(1-u_y^{-1})\geq 0$, we obtain
\[
a\log u_y
\leq h(u_y)
\leq \Delta_R.
\]
Hence,
\begin{equation}
\label{eq:app_upper_ratio_bound}
u_y
\leq
\exp\left(\frac{\Delta_R}{a}\right).
\end{equation}
For $u_y<1$, this upper bound holds automatically
because $\exp(\Delta_R/a)\geq 1$.
Combining the two bounds proves that
\begin{equation}
\label{eq:app_probability_ratio_bound}
\frac{b}{b+\Delta_R}
\leq
\frac{p^\star(y)}{q(y)}
\leq
\exp\left(\frac{\Delta_R}{a}\right)
\end{equation}
for every $y\in\mathcal{Y}$.
\par\medskip
\noindent
\textbf{Step 5: Preservation of teacher behavior sets.}
For any response set
$\mathcal{E}\subseteq\mathcal{Y}$,
summing the lower bound in
Eq.~\eqref{eq:app_probability_ratio_bound} gives
\begin{equation}
\label{eq:app_behavior_preservation}
\begin{aligned}
p^\star(\mathcal{E})
&=
\sum_{y\in\mathcal{E}}q_yu_y \\
&\geq
\frac{b}{b+\Delta_R}
\sum_{y\in\mathcal{E}}q_y \\
&=
\frac{b}{b+\Delta_R}\,q(\mathcal{E}).
\end{aligned}
\end{equation}
This completes the proof.
\end{proof}

\section{Hyperparameter Analysis}
\label{sec:hyperparameter_sensitivity}
\label{sec:gamma_search}

The experiments in Table~\ref{tab:hyperparameter_sensitivity} use the Qwen2.5-0.5B student and vary the intervention threshold $\gamma$, online teacher-guidance weight $\beta$, or auxiliary SFT weight $\lambda$, holding all other settings fixed.

\begin{table}[htbp]
\centering
\small
\begin{tabular*}{\columnwidth}{@{\extracolsep{\fill}}cccc@{}}
\toprule
\textbf{Parameter} & \textbf{Value} & \textbf{BFCL-v4} & \textbf{API-Bank} \\
\midrule
\multirow{4}{*}{$\gamma$}
& 3.0 & 6.19 & 35.18 \\
& 3.5 & 17.34 & 43.55 \\
& 3.9 & 17.66 & 43.07 \\
& 3.99 & \textbf{18.99} & \textbf{46.06} \\
\midrule
\multirow{3}{*}{$\beta$}
& 0.5 & 17.69 & 45.39 \\
& 1.0 & \textbf{18.99} & \textbf{46.06} \\
& 2.0 & 15.58 & 45.06 \\
\midrule
\multirow{3}{*}{$\lambda$}
& 0.0 & 16.05 & 37.35 \\
& 0.1 & \textbf{18.99} & \textbf{46.06} \\
& 0.2 & 16.21 & 45.56 \\
\bottomrule
\end{tabular*}
\caption{Hyperparameter sensitivity for the Qwen2.5-0.5B student. The best result in each column within each parameter group is bolded.}
\label{tab:hyperparameter_sensitivity}
\label{tab:gamma_sensitivity}
\label{tab:beta_sensitivity}
\label{tab:lambda_sensitivity}
\end{table}

Within the tested range, an intervention threshold close to the maximum reward performs best, favoring guidance for a broader range of sub-optimal attempts; a substantially lower threshold reduces performance. Moderate online guidance is more effective, whereas stronger teacher signals may weaken task-driven updates. Auxiliary SFT also favors a moderate weight: removing it reduces both benchmark scores, particularly API-Bank, whereas increasing its weight gives no further gains.

\makeatletter
\setlength{\@fptop}{0pt}
\makeatother
\section{Teacher Selection Results}
\label{sec:teacher_selection_results}

Table~\ref{tab:teacher_scale_results} reports the Figure~\ref{fig:ablation_teacher} scores for the Qwen2.5-0.5B student across teacher sizes.

\begin{table}[!t]
\centering
\small
\setlength{\tabcolsep}{4pt}
\resizebox{\columnwidth}{!}{%
\begin{tabular}{@{}lcccc@{}}
\toprule
\textbf{Teacher Model}
& \textbf{BFCL-v4}
& \textbf{API-Bank}
& \textbf{GSM8K}
& \textbf{HotpotQA} \\
\midrule
Qwen2.5-0.5B & 16.73 & 38.53 & 30.55 & 23.0 \\
Qwen2.5-1.5B & 18.99 & 46.06 & 38.51 & 35.0 \\
Qwen2.5-3B & 17.52 & 42.04 & 31.77 & 35.0 \\
Qwen2.5-7B & 12.93 & 31.16 & 35.10 & 29.0 \\
\bottomrule
\end{tabular}}
\caption{Performance of the Qwen2.5-0.5B student under teachers of different model scales.}
\label{tab:teacher_scale_results}
\end{table}

\end{document}